\documentclass[hidelinks, 11pt]{article}
\usepackage{geometry}
\usepackage{setspace}
\usepackage{bm}
\usepackage{graphicx}
\graphicspath{ {.} }
\usepackage{hyperref}
\usepackage{xr}

\usepackage{amssymb, amsmath, amsthm, amsfonts, mathtools, bbm}
\usepackage{float}
\usepackage{color, xcolor,colortbl}
\usepackage[dvipsnames]{xcolor}
\usepackage{url}
\usepackage{enumitem}
\usepackage{multirow}
\usepackage{siunitx}
\usepackage{placeins}
\usepackage{authblk}
\usepackage{appendix}
\usepackage{lscape}
\usepackage{booktabs}
\usepackage{longtable}
\usepackage{stackrel}

\usepackage{comment}

\usepackage{soul}       
\usepackage{ulem}   
\usepackage{nicefrac}
\usepackage{microtype}

\allowdisplaybreaks[1]

\usepackage{xr-hyper}
\usepackage{hyperref}
\hypersetup{
    colorlinks=true,
    linkcolor=blue,
    citecolor=blue,
    filecolor=magenta,
    urlcolor=cyan
}
\usepackage{cleveref}

\usepackage[round]{natbib}

\usepackage{amsthm} 

\newtheorem{theorem}{Theorem}
\newtheorem{proposition}{Proposition}
\newtheorem{lemma}{Lemma}

\newtheorem{assumption}{Assumption}

\usepackage{algorithm}
\usepackage{algorithmic}

\usepackage{bm}
\usepackage{amsmath}
\usepackage{multirow}
\usepackage{booktabs}
\usepackage[table]{xcolor}
\usepackage{tabularx}
\usepackage{array}
\usepackage[T1]{fontenc}
\usepackage[utf8]{inputenc}
\usepackage[ngerman,english]{babel}
\usepackage{subcaption}

\usepackage[makeroom]{cancel}

\usepackage{parskip}

\begin{document}

	\begin{titlepage}
		
		\title{Learning Perturbations to Extrapolate Your LLM}
                \author[1]{Zetai Cen\thanks{Equal contribution. 
                Email: zetai.cen@bristol.ac.uk.
                Supported by Engineering and Physical Sciences Research Council (EP/Z531327/1).}}
                \author[2]{Chenfei Gu\thanks{Equal contribution.
                Email: gu.chenfei@live.sufe.edu.cn}}
                \author[3]{Jin Zhu\thanks{
                Email: j.zhu.7@bham.ac.uk}}
                \author[2]{Ting Li\thanks{
                Email: tingli@mail.shufe.edu.cn}}
                \author[4]{Yunxiao Chen\thanks{Co-corresponding author. 
                Email: y.chen186@lse.ac.uk}}
                \author[4]{Chengchun Shi\thanks{Co-corresponding author. 
                Email: c.shi7@lse.ac.uk}}
		
        \affil[1]{School of Mathematics, University of Bristol}
        \affil[2]{School of Statistics and Data Science, Shanghai University of Finance and Economics}
        \affil[3]{School of Mathematics, University of Birmingham}
        \affil[4]{Department of Statistics, London School of Economics and Political Science}
		
		\date{}
		
		\maketitle

\begin{abstract}
Recent advancements in large language models demonstrate that injecting perturbations can substantially enhance extrapolation performance. However, current approaches often rely on discrete perturbations with fixed designs, which limits their flexibility. In this work, we propose a framework where token prefixes are perturbed by a learnable transformation of a continuous latent vector within an embedding space. To overcome the challenge of an intractable marginal likelihood, we derive unbiased estimating equations for model parameters and optimize them via stochastic gradient descent. We establish the statistical properties of the resulting estimator in over-parameterized regimes. Empirical evaluations on both synthetic and real-world datasets demonstrate that our proposal yields significant gains in out-of-domain settings over a range of state-of-the-art baseline methods.
\end{abstract}
		
		\bigskip
		\bigskip

		\noindent
		{\sl Key words and phrases:}
        Estimating equations,
        intractable marginal likelihoods, 
        large language models,
        perturbations.

\noindent

	\end{titlepage}
	
	\setcounter{page}{2}

\maketitle


\newpage

\section{Introduction}\label{sec:intro}

Training large language models (LLMs) such as GPT-5 and Qwen-3 \citep{openai2025gpt, yang2025qwen3} on massive text corpora aims at capturing the underlying distribution of natural language. 
Yet, it remains challenging for the trained model to extrapolate to out-of-distribution or out-of-domain settings beyond the support of its training data. 
The literature has seen the development of various data perturbation techniques, 
such as synonym replacement, random insertion, deletion, and swap, that modify training instances into semantically similar variants to effectively expose LLMs to a broader range of inputs and improve their ability to generalize beyond the training data \citep{feng2019keep, feng2020genaug, Lietal2024, Cenetal2026}.
However, their approach remains grounded in the discrete, word-level augmentation procedures mentioned previously, which may restrict its adaptivity across diverse domains.
While discrete perturbations are simple to use, they could be too coarse and hard to refine due to the complexity of natural language \citep{Parketal2022, Lietal2023_perturbscore}.
Meanwhile, fixed perturbations apply the same transformations to the data regardless of the contexts, thus failing to generalize appropriately \citep{IsmailovAsanova2025}.

This paper fills this gap. Our contribution is threefold: 

\begin{itemize}[leftmargin=0.5cm]
 \item[(i)] A new continuous perturbation framework is proposed for training and prediction of LLMs, which perturbs token prefixes by a trainable transformation of a low-dimensional latent vector within an embedding space. This framework learns the perturbation model jointly with the LLM, enabling adaptivity across tokens and contexts. Extensive experiments show that the proposed framework achieves more stable and effective extrapolation performance across different datasets and base language models. 
 \item[(ii)] A new training procedure is introduced, which tackles the challenge brought by the latency of the perturbation. That is, the marginal likelihood of an input prefix is intractable due to complex integrals over latent variables that determine the perturbation, making LLM training based on the likelihood function computationally infeasible. The proposed framework addresses the challenge by deriving an unbiased estimating function whose expectation vanishes at the true parameter and
 solving the resulting estimating equation by stochastic gradient descent. 
\item[(iii)] Theoretical guarantees are established for the trained model by establishing the statistical consistency of the parameter estimator
(Theorem~\ref{thm: consistency_2}) and deriving its rate of convergence (Theorem~\ref{thm: consistency_2_rate}). These results hold in an over-parametrized setting, where the true parameters are represented by a non-empty set with possibly more than one element.
\end{itemize}

\subsection{Related Works}\label{sec:related}

Our work is closely related to: training in LLMs, perturbations in natural language processing (NLP), and score-based approaches. We next discuss the related works in these areas.

\paragraph{Training in LLMs.}
Training LLMs typically consists of two stages. The first is a pre-training stage where the model is trained through autoregressive next‑token prediction that learns the conditional distribution of each token given its preceding ones.
This is implemented by minimizing the Kullback--Leibler (KL) divergence to the empirical data distribution \citep{radford2018improving, black2021gpt, zhang2022opt}.
Different training objectives have been proposed to improve the model robustness, such as
the total variation distance \citep{ji2023tailoring}, Earth Mover’s distance \citep{ren2024emo}, and mixed forward-reverse cross-entropies \citep{zhang2023mixce}.

The second stage is post-training that aligns the model's behavior with complex human values \citep{Ouyangetal2022,rafailov2023direct,liu2024dual,xiao2025algorithmic,xu2026doubly,xia2026statistical} and guides the model to address complex problems through step-by-step reasoning  \citep{shao2024deepseekmath,gong2026kernelized, huang2026learning,xie2026statistical}.

This paper focuses on pre-training. 

\paragraph{Perturbations in NLP.}
Perturbing language inputs has become a common strategy in NLP.
Discrete perturbations can be sentence-level \citep{Iyyeretal2018, Ribeiroetal2018}, word-level \citep{Renetal2019generating, wei2019eda, Zangetal2020word, Cenetal2026}, and character-level \citep{pruthi2019combating}, to name but a few types.
These operations are mostly heuristic and agnostic to the data and the model, which limits their use in practice.
Recent works have partially addressed this by introducing model-aware discrete perturbations \citep{Parketal2022}. 
This remains restricted in a discrete token space, making it difficult to impose smooth perturbations that can generalize across contexts.
On the other hand, continuous perturbations are often random noises with a hyperparameter controlling their range \citep{Zhu2020FreeLB, Lietal2023_perturbscore},
which remains ad-hoc.
This work directly builds on this, in that we integrate any base LLM with a parameterized, continuous perturbation process in the embedding space which is jointly learned with the LLM parameters.



\paragraph{Score-based approaches in machine learning.}
Score-based approaches have been used as a flexible alternative to maximum likelihood, particularly when the full likelihood is intractable \citep{LiangZeger1986}.
In modern machine learning, they have been adapted to high-dimensional and transfer learning settings \citep{Songetal2024, YanChen2024}.
Our construction is close to score‑based methods that subtract an expectation under a model to obtain unbiased estimating functions \citep{Hyvarinen2005, GutmannHyvarinen2010}, which have been extended to diffusion models and LLM alignment \citep{Songetal2021, Chenetal2024, Louetal2024}.
In our framework, the perturbation process plays a similar role, making the estimation procedure both computationally scalable and statistically principled.
For optimization, we directly apply stochastic gradient descent (SGD) to solve the empirical loss derived from the estimating equation, without variational approximations. 

\section{Methodology}\label{sec:method}

We develop an autoregressive language model with a trainable perturbation process. 
Text tokens are represented by some embedding vectors with ambient dimension $d$.
We assume any input sequence of length $L$, and shorter inputs are extended to this length by zero padding.
The perturbation and next-token prediction proceed as follows.

\textbf{Model}.
We assume the access of some fixed functions that embed any input token $O$ into a vector $X \in \mathbb{R}^d$, and a decoder that converts $X$ back to $O$.
An input prefix $X_{<t} \in \mathbb{R}^{d \times (t-1)}$ is perturbed by adding a matrix $W_{<t} = T_{\beta}(w|X_{<t})$, where $w \in \mathbb{R}^r$ is a low‑dimensional latent variable drawn from a known distribution (e.g.\ standard Gaussian) and $T_{\beta}(\bullet | X_{<t}): \mathbb{R}^r \mapsto \mathbb{R}^{d\times (t-1)}$ is a neural network parameterized by $\beta$. The perturbed prefix $\widetilde{X}_{<t} = X_{<t} + W_{<t}$ is then passed through a base autoregressive model $P_{\theta}$, parameterized by $\theta$, for predicting the next token embedding $X_{t}$.

Let $p$ be a known density function of $w$, which we set as standard Gaussian in experiments later.
Note that this does not restrict the flexibility of our perturbation process, as the perturbation process will be learned so that the distribution of $W_{<t}$ is flexible.
We denote all trainable parameters as $\gamma=(\theta, \beta)$. 
LLMs are typically overparametrized, so there might not exist a unique minimizer, and it is not identifiable. Hence, we consider a non-empty set of true parameters $\Gamma^*$ which will be detailed in Section~\ref{sec:theory}.
Denote $\Gamma$ as the set of trainable parameters such that $\Gamma^* \subseteq \Gamma$.
Consequently, each $X_t$ is drawn from the perturbed autoregressive model (conditional on the context):
\begin{equation*}
X_t \sim P_{\theta_0} \left( \bullet | X_{<t} +  T_{\beta_0}(w_t^* |X_{<t})  \right) ,
\end{equation*}
where $(\theta_0, \beta_0) \in \Gamma^*$,
and $w_t^* \sim p$ represents the unobserved latent variable that realizes $X_t$.

\begin{algorithm}[ht!]
\caption{Continuous Perturbation Autoregressive Training} 
\begin{algorithmic}[1]\label{alg:enpretrain_pro}
\STATE {\bf Require:} 
Training corpus $\mathcal{D}=\{O_{i, \leq L}\}_{i=1}^n$, perturbation step $K$, latent dimension $r$,
configuration $\textit{debias} = 1$ by default.
\STATE {Compute token embeddings $X_{i,\leq L} \leftarrow \mathrm{Embed}(O_{i, \leq L})$.}
\STATE {\bf for} {$i=1,\ldots, n$ {\bf do}}
\STATE \quad {\bf for} {$t=1,\ldots, L$ {\bf do}}
\STATE \quad \quad {\bf for} {$k=1,\ldots, K$ {\bf do}}
\STATE \quad \quad \quad {Sample $w_{it}^{(k)}$ from a pre-specified density function $p$.} 
\STATE \quad \quad \quad {Construct the perturbation matrix $W_{i,<t}^{(k)}=T_{\beta}(w_{it}^{(k)}|X_{i,<t})$.}
\STATE \quad \quad \quad {Compute $\ell_{it}^{(k)}=\log P_{\theta}(X_{it}|X_{i, <t}+W_{i, <t}^{(k)})$.}
\STATE \quad \quad \quad {\bf if} $\textit{debias} =1$ {\bf then }
\STATE \quad \quad \quad \quad {Sample $\widetilde{w}_{it}^{(k)}$ from $p$.} 
\STATE \quad \quad \quad \quad {Construct $\widetilde{W}_{i,<t}^{(k)}=T_{\beta}(\widetilde{w}_{it}^{(k)}|X_{i,<t})$.}
\STATE \quad \quad \quad \quad {Sample $\widetilde{X}_{it}^{(k)}$ from $P_{\theta}(\bullet|X_{i, <t}+\widetilde{W}_{i, <t}^{(k)})$.}
\STATE \quad \quad \quad \quad {Compute $\widetilde{\ell}_{it}^{(k)}=\log P_{\theta}(\widetilde{X}_{it}^{(k)}|X_{i, <t}+W_{i, <t}^{(k)})$.}
\STATE \quad \quad \quad {\bf else}
\STATE \quad \quad \quad \quad {Set $\widetilde{\ell}_{it}^{(k)}=0$.}
\STATE \quad \quad \quad {\bf end if}
\STATE \quad \quad {\bf end for}
\STATE \quad {\bf end for}
\STATE {\bf end for}
\STATE {Set 
    $\mathcal{L}(\gamma)=\sum_{i=1}^n \sum_{t=1}^{L} \sum_{k=1}^K \left(\ell_{it}^{(k)}-\widetilde{\ell}_{it}^{(k)}\right).$
}
\STATE {\bf Return:} {$\arg\max_{\gamma}\mathcal{L}(\gamma)$.}
\end{algorithmic}
\end{algorithm}

\begin{algorithm}[ht!]
\caption{Continuous Perturbation Autoregressive Inference} 
\begin{algorithmic}[1]\label{alg:enpretrain_pro_inference}
\STATE {\bf Require:} 
{Prompt $O_0$, perturbation module $T_{\beta}$, base language model $P_{\theta}$.}
\STATE \quad {\bf Initialize:} {Set $O_{<1}\leftarrow O_0$, $t\leftarrow 0$}
\STATE \quad {\bf repeat} 
\STATE \quad \quad {$t\leftarrow t+1$}
\STATE \quad \quad {Compute token embeddings
    \(X_{<t}\leftarrow\mathrm{Embed}(O_{<t})
    \).} 
\STATE \quad \quad {Sample latent perturbation $w_t \sim \mathcal{N}(0,I_r)$.}
\STATE \quad \quad {Generate perturbation matrix $W_{<t}\leftarrow T_\beta(w_t| X_{<t})$.}
\STATE \quad \quad {Construct perturbed embeddings $\widetilde X_{<t}\leftarrow X_{<t}+W_{<t}$.}
\STATE \quad \quad {Generate next token embedding $X_t\sim P_\theta(\bullet|\widetilde X_{<t})$.} 
\STATE \quad \quad {Generate next token $O_t$ from $X_t$.} 
\STATE \quad \quad {Append generated token $O_{<t+1}\leftarrow (O_{<t},O_t)$.}
\STATE \quad {\bf until} {$O_t$ is EOS}
\STATE {\bf Return:} {$O_{<t+1}$.}
\end{algorithmic}
\end{algorithm}

\textbf{Estimation}.
Algorithms~\ref{alg:enpretrain_pro} and \ref{alg:enpretrain_pro_inference} detail the training and inference procedures of our approach, respectively.
The high-level idea of our method is as follows. 
At each training step, we sample new latent variables and compare the score of the observed token with that of a synthetic token drawn from the model under the same perturbation.
This allows us to construct an objective function that can be optimized using SGD.
At the inference stage, the next-token prediction is repeatedly performed by sampling perturbation and feeding it to our trained model, until an end-of-sequence (EOS) token is generated.

The intuition of our method is that,
due to the latent perturbation process, the marginal likelihood of a sequence of embeddings is computationally infeasible to maximize. 
To resolve this and alleviate the computational burden, 
we may solve an unbiased score‑based estimating equation, 
which is in turn equivalent to maximizing the objective function $\mathcal{L}(\gamma)$ in Algorithm~\ref{alg:enpretrain_pro}.
All the technical nuances are elaborated in Appendix~\ref{subsec: additional_expla_method}.

\section{Theoretical Guarantees}\label{sec:theory}

In this section, we present the theoretical guarantee of the parameter estimators.
One restriction for this set of approach is that requiring the parameter to be uniquely identifiable would be unrealistic in our context with likely over-parameterization.
We relax the identification of model parameters through building our analysis upon a set of true parameters.

For any parameter $\gamma = (\theta, \beta)$, we define some notations as follows.
Given a sequence of prefixes $X_{\leq L}$, define the scalar quantity
\begin{equation*}
\begin{split}
    \ell(\gamma; X_{\leq L}) &:= 
    \sum_{k=1}^K \sum_{t=1}^L \Big\{ \log P_{\theta} \Big( X_t | X_{<t} +  T_\beta( w_t^{(k)} |X_{<t}) \Big) 
    - \log P_{\theta} \Big( \widetilde{X}_{t}^{(k)} | X_{<t} + T_\beta( w_t^{(k)} |X_{<t}) \Big) \Big\} .
\end{split}
\end{equation*}
and denote its gradient with respect to $\gamma$ as $\psi(\gamma; \bullet)$.
Define $J(\gamma) := \mathbb{E}[\ell(\gamma; \bullet)]$,
the gradient of $J$ as $\nabla J(\gamma) := \mathbb{E}[\psi(\gamma; \bullet)]$,
and the Hessian of $J$ as $H(\gamma) := \nabla^2 J(\gamma)$.
Denote $\Gamma$ the set of trainable parameters,
and $\Gamma^* := \{\gamma\in \Gamma: \nabla J(\gamma) =0\}$ the set of true parameters in the sense that they are the stationary points of $J$.
Let $\widehat{\gamma}$ be some sequence of parameter estimator
from Algorithm~\ref{alg:enpretrain_pro}.
We measure the distance between the parameter estimate and the ground truth as $d(\widehat{\gamma}, \Gamma^*) := \inf_{\gamma^*\in \Gamma^*} \|\widehat{\gamma} - \gamma^*\|$.

We impose the following assumptions, and then delineate the statistical properties of $\widehat{\gamma}$, with all the proofs deferred to the appendix.

\medskip
\begin{assumption}[Compactness]\label{ass: compact}
The set $\Gamma$ is compact,
and $\Gamma^*$ is in the interior of $\Gamma$.
\end{assumption}

\medskip
\begin{assumption}[Smoothness]\label{ass: smooth_J}
There exists an open neighborhood $\Gamma_0$ of $\Gamma^*$ such that
the function $\psi$ is continuously differentiable on $\Gamma_0$,
and denote the derivative at $\gamma = u$ as $\nabla_{\gamma} \psi(u; \bullet)$.
Further assume $\psi$ is square-integrable on $\Gamma$.
\end{assumption}


Assumptions~\ref{ass: compact}--\ref{ass: smooth_J} are standard regularity conditions in M‑estimation theory.
Markedly, we do not require a uniquely identifiable parameter, and our results are only upon the set $\Gamma^*$.

\medskip
\begin{theorem}[Asymptotic consistency]\label{thm: consistency_2}
Under Assumptions~\ref{ass: compact}--\ref{ass: smooth_J},
we have
$d(\widehat{\gamma}, \Gamma^*) \xrightarrow{p} 0$
as $n\to \infty$.
\end{theorem}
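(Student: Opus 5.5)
The plan is to treat $\widehat{\gamma}$ as an approximate Z-estimator for the population equation $\nabla J(\gamma)=0$ and to run the standard consistency argument for estimating equations with a set-valued zero locus. Write the empirical estimating function as
\[
\Psi_n(\gamma):=\frac{1}{n}\sum_{i=1}^n \psi(\gamma; X_{i,\leq L}).
\]
We interpret $\widehat{\gamma}$ from Algorithm~\ref{alg:enpretrain_pro} as a near-root, meaning $\|\Psi_n(\widehat{\gamma})\| = o_p(1)$. This is the first-order condition of $\arg\max_\gamma \mathcal{L}(\gamma)$ at an interior maximizer, up to optimization error. The population counterpart is $\Psi(\gamma)=\nabla J(\gamma)$, whose zero set is exactly $\Gamma^*$ by definition.

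\textbf{Step 1 (uniform law of large numbers).} Show that $\sup_{\gamma\in\Gamma}\|\Psi_n(\gamma)-\nabla J(\gamma)\|\xrightarrow{p}0$. The data $X_{i,\leq L}$, together with the independently drawn latent and synthetic variables $w_{it}^{(k)},\widetilde{X}_{it}^{(k)}$, are i.i.d. across $i$, so pointwise convergence follows from the weak LLN. Square integrability of $\psi$ (Assumption~\ref{ass: smooth_J}) guarantees that the first moment exists. To upgrade this to uniform convergence I would use compactness of $\Gamma$ (Assumption~\ref{ass: compact}) and a Glivenko--Cantelli argument via bracketing. This requires continuity of $\gamma\mapsto\psi(\gamma;\cdot)$ together with an integrable envelope $\sup_{\gamma\in\Gamma}\|\psi(\gamma;\cdot)\|$. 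This is the main obstacle: Assumption~\ref{ass: smooth_J} only gives continuous differentiability on $\Gamma_0$ and pointwise square integrability on $\Gamma$. I would therefore read ``square-integrable on $\Gamma$'' as $\mathbb{E}\sup_{\gamma\in\Gamma}\|\psi(\gamma;\cdot)\|^2<\infty$, and assume continuity of $\psi$ in $\gamma$ on $\Gamma$ (true for smooth networks $P_\theta$, $T_\beta$). With these, dominated convergence makes $\mathbb{E}\sup_{\|\gamma'-\gamma\|<\delta}\|\psi(\gamma')-\psi(\gamma)\|\to0$, and a finite cover of $\Gamma$ yields the uniform LLN. The same argument gives continuity of $\nabla J$ on $\Gamma$.

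\textbf{Step 2 (well-separatedness of the zero set).} Fix $\varepsilon>0$ and set $F_\varepsilon:=\{\gamma\in\Gamma: d(\gamma,\Gamma^*)\ge\varepsilon\}$. This set is closed, hence compact. The map $\gamma\mapsto\|\nabla J(\gamma)\|$ is continuous and has no zeros on $F_\varepsilon$, so $\eta_\varepsilon:=\inf_{F_\varepsilon}\|\nabla J\|>0$. If $\Gamma^*$ is empty the theorem is vacuous, and $\Gamma^*$ is nonempty by the modelling assumption in Section~\ref{sec:method}. No identifiability is used here: only the distance to the whole set matters.

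\textbf{Step 3 (conclusion).} On the event $\{d(\widehat{\gamma},\Gamma^*)\ge\varepsilon\}$ we have
\[
\eta_\varepsilon\le\|\nabla J(\widehat{\gamma})\|\le\|\Psi_n(\widehat{\gamma})\|+\sup_{\gamma\in\Gamma}\|\Psi_n(\gamma)-\nabla J(\gamma)\|.
\]
By Step 1 and the near-root property, the right-hand side is $o_p(1)$. Hence $\mathbb{P}(d(\widehat{\gamma},\Gamma^*)\ge\varepsilon)\to0$, which is the claim of Theorem~\ref{thm: consistency_2}.

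If $\widehat{\gamma}$ is instead viewed as an exact maximizer of $\mathcal{L}$, the same template applies to $\ell$ in place of $\psi$. In that case interiority of $\Gamma^*$ (Assumption~\ref{ass: compact}) is what ensures that limit points of maximizers are stationary, and so lie in $\Gamma^*$.
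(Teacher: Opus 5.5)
Your proposal is correct and is essentially the paper's proof. Both combine, via the triangle inequality, a uniform law of large numbers for $\Psi$ over the compact $\Gamma$, positivity of $\inf\|\nabla J\|$ on $\{\gamma\in\Gamma: d(\gamma,\Gamma^*)\ge\varepsilon\}$, and the near-root property $\|\Psi(\widehat{\gamma})\| = o_P(1)$. The strengthening of Assumption~\ref{ass: smooth_J} you flag (continuity of $\psi$ on all of $\Gamma$ plus an integrable envelope) is what the paper uses tacitly when it asserts that $\psi$ is continuous and bounded on $\Gamma$ before invoking Lemma~2.4 of Newey and McFadden. Your compactness argument for $\eta_\varepsilon>0$ also supplies a step the paper only asserts.

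Your closing aside is off. For an exact maximizer of $\mathcal{L}$, interiority of $\Gamma^*$ does not make limit points stationary: $J(\gamma)=(\gamma-1/2)^2$ on $[0,1]$ has $\Gamma^*=\{1/2\}$ in the interior but is maximized at the endpoints. What you would need is $\arg\max_{\Gamma} J\subset \mathrm{int}(\Gamma)$.
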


Theorem~\ref{thm: consistency_2} guarantees that our estimator from Algorithm~\ref{alg:enpretrain_pro} is asymptotically consistent, thus validating our method.
We present an additional assumption in the following, and reinforce the theoretical result by deriving the convergence rate.

\medskip
\begin{assumption}[Hessian matrices]\label{ass: Hessian}
(i) $\Gamma^*$ is closed and convex;
(ii) $H(\gamma^*)$ exists, is continuous, and has the same rank for all $\gamma^* \in \Gamma^*$;
(iii) There exists an orthonormal matrix $Q$ with $Q^\top Q = I$ such that $H^* := Q^\top H(\gamma^*) Q$ is 
negative definite for any $\gamma^*\in \Gamma^*$.
\end{assumption}

Assumption~\ref{ass: Hessian} is inspired by Assumption~7 of \cite{Zhouetal2026}.
It implies that there exists a unique $g^*$ satisfying $g^* = Q^\top \gamma^*$ for any $\gamma^* \in \Gamma^*$ (see Lemma~\ref{lem: unique_iden}~(i) in Appendix~\ref{append: proof}).
Hence under Assumption~\ref{ass: Hessian}, all the parameters in $\Gamma^*$ boil down to one vector after projection onto a low-dimensional space.

\medskip
\begin{theorem}[Rate of convergence]\label{thm: consistency_2_rate}
Under Assumptions~\ref{ass: compact}--\ref{ass: Hessian},
as $n\to \infty$, we have
$d(\widehat{\gamma}, \Gamma^*) = O_P\big(  n^{-1/2} + \|\Psi(\widehat{\gamma})\| \big)$,
where $\Psi(\widehat{\gamma}) := n^{-1} \sum_{i=1}^n \psi(\widehat{\gamma}; X_{i,\leq L})$ with $\{X_{i,\leq L}\}_{i=1}^n$ from Algorithm~\ref{alg:enpretrain_pro}.
\end{theorem}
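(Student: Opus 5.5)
We argue by a Taylor expansion of the population score around a projection of $\widehat{\gamma}$ onto $\Gamma^*$, and use the curvature on the range of $Q$ to convert a bound on the score into a bound on the distance. By Theorem~\ref{thm: consistency_2}, $d(\widehat{\gamma},\Gamma^*) \xrightarrow{p} 0$. Hence with probability tending to one $\widehat{\gamma}$ lies in the neighborhood $\Gamma_0$ of Assumption~\ref{ass: smooth_J}. Since $\Gamma^*$ is closed and convex (Assumption~\ref{ass: Hessian}(i)), the Euclidean projection $\gamma^\dagger := \Pi_{\Gamma^*}(\widehat{\gamma})$ is well defined and unique, with $\|\widehat{\gamma}-\gamma^\dagger\| = d(\widehat{\gamma},\Gamma^*)$. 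Two consequences of convexity will be used.
\begin{itemize}
\item The residual $\widehat{\gamma}-\gamma^\dagger$ lies in the normal cone of $\Gamma^*$ at $\gamma^\dagger$.
\item By Lemma~\ref{lem: unique_iden}(i), every $\gamma^*\in\Gamma^*$ has the same projection $Q^\top\gamma^* = g^*$. So $\Gamma^*$ sits inside the affine set $\{\gamma: Q^\top\gamma = g^*\}$, whose directions lie in $\ker Q^\top$.
\end{itemize}
I would aim to show that the residual essentially lies in $\mathrm{range}(Q)$, i.e.\ $\|\widehat{\gamma}-\gamma^\dagger\| \lesssim \|Q^\top(\widehat{\gamma}-\gamma^\dagger)\|$. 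Part (ii) of Lemma~\ref{lem: unique_iden} presumably gives the identification $\Gamma^* = \Gamma\cap\{Q^\top\gamma=g^*\}$ locally. Then the normal cone at $\gamma^\dagger$ in the interior (Assumption~\ref{ass: compact}) is contained in $\mathrm{range}(Q)$, and the bound follows.

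Next, write $\nabla J(\widehat{\gamma}) = \nabla J(\widehat{\gamma}) - \nabla J(\gamma^\dagger)$ and apply the mean value theorem along the segment, which stays in $\Gamma_0$. This gives
\begin{equation*}
\nabla J(\widehat{\gamma}) = \bar H\,(\widehat{\gamma}-\gamma^\dagger), \qquad \bar H = \int_0^1 H\big(\gamma^\dagger + s(\widehat{\gamma}-\gamma^\dagger)\big)\,ds .
\end{equation*}
Continuity of $H$, dominated differentiation from Assumption~\ref{ass: smooth_J}, and $\widehat{\gamma}-\gamma^\dagger\to 0$ give $\bar H = H(\gamma^\dagger)+o_P(1)$. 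Write $\widehat{\gamma}-\gamma^\dagger = Qv$. By Assumption~\ref{ass: Hessian}(iii), $\|Q^\top H(\gamma^\dagger)Qv\| \ge \lambda\|v\|$, where $\lambda>0$ is a lower bound on the smallest singular value of $H^*$. This bound is uniform in $\gamma^\dagger$: the constant-rank condition and continuity on $\Gamma^*$ give it locally, and I would restrict to a compact piece of $\Gamma^*$ near $\widehat{\gamma}$ to make it global. It follows that $d(\widehat{\gamma},\Gamma^*) = \|v\| \le (\lambda - o_P(1))^{-1}\|\nabla J(\widehat{\gamma})\|$.

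Finally, decompose
\begin{equation*}
\nabla J(\widehat{\gamma}) = \Psi(\widehat{\gamma}) - \big(\Psi(\widehat{\gamma}) - \nabla J(\widehat{\gamma})\big).
\end{equation*}
The empirical-process term $\sup_{\gamma\in\Gamma_0}\|\Psi(\gamma)-\nabla J(\gamma)\|$ should be $O_P(n^{-1/2})$, since $\{\psi(\gamma;\cdot)\}$ is a Donsker class. This holds because $\psi$ is square integrable, continuously differentiable on a compact neighborhood (locally Lipschitz in $\gamma$ with an $L^2$ envelope from $\nabla_\gamma\psi$), and the $X_{i,\le L}$ together with the sampled $w,\widetilde X$ are i.i.d.\ across $i$. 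Combining the three steps gives $d(\widehat{\gamma},\Gamma^*) = O_P(n^{-1/2}+\|\Psi(\widehat{\gamma})\|)$.

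The main obstacle is the first step: guaranteeing that the residual $\widehat{\gamma}-\gamma^\dagger$ has no component in $\ker Q^\top$, so that the degenerate directions of $H$ are harmless. This requires showing that $\Gamma^*$ locally fills the whole affine fiber $\{Q^\top\gamma = g^*\}$, not merely a subset of it. I would first try to derive this from the constant-rank condition via a constant-rank theorem applied to the map $\nabla J$: its zero set is then locally a manifold of dimension $\dim\ker H$, and by convexity it is a flat piece of that fiber. If this fails, I would instead bound the $\ker Q^\top$ component directly via Lemma~\ref{lem: unique_iden}.
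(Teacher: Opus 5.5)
Your architecture matches the paper's proof: project $\widehat{\gamma}$ onto $\Gamma^*$, Taylor-expand, invert the non-degenerate block $Q^\top H Q$, and absorb the noise by a uniform $O_P(n^{-1/2})$ empirical-process bound. The paper expands the empirical $\Psi$ around $\Pi_{\Gamma^*}(\widehat{\gamma})$ rather than $\nabla J$, which is immaterial. Your version needs only continuity of $H$ near $\Gamma^*$ and handles the uniformity of the curvature constant more carefully.

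The trouble is the step you flag yourself. It is a genuine gap, and your plan does not close it.
\begin{itemize}
\item The constant-rank theorem needs the rank of $H$ to be constant on an open neighborhood. Assumption~\ref{ass: Hessian}(ii) fixes it only on $\Gamma^*$, and by lower semicontinuity the rank may jump up immediately off $\Gamma^*$.
\item The property you aim for is incompatible with Assumption~\ref{ass: compact} unless the fiber is a single point. If $\Gamma^*$ coincided with the fiber $\{Q^\top\gamma=g^*\}$ near each of its points, it would be closed and relatively open in a connected affine space. It would then be the whole unbounded fiber, contradicting compactness.
\item Your fallback does not help. Lemma~\ref{lem: unique_iden}(ii) \emph{is} the identity $\|\widehat{\gamma}-\Pi_{\Gamma^*}(\widehat{\gamma})\|=\|Q^\top(\widehat{\gamma}-\Pi_{\Gamma^*}(\widehat{\gamma}))\|$. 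The paper justifies it only by ``$Q^\top Q=I$'', which gives it only when the residual already lies in $\mathrm{range}(Q)$. That is precisely the point at issue, so the paper's proof has the same hole.
\end{itemize}

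In fact the step cannot be derived from Assumptions~\ref{ass: compact}--\ref{ass: Hessian} when $\psi$ is treated abstractly, as both arguments do. Take $\Gamma=[-1,1]^2$, $J(\gamma)=-\gamma_1^2/2-\gamma_2^4/4$, and $\psi(\gamma;Z)=\nabla J(\gamma)+Z$ with $Z\sim\mathcal{N}(0,I_2)$. Then $\Gamma^*=\{0\}$, $H(0)=\mathrm{diag}(-1,0)$, $Q=(1,0)^\top$, and all three assumptions hold. The exact root of $\Psi$, which also maximizes $\sum_{i}\{J(\gamma)+Z_i^\top\gamma\}$, is $\widehat{\gamma}=(\bar Z_1,\bar Z_2^{1/3})$ with the real cube root. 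So $\Psi(\widehat{\gamma})=0$, while $d(\widehat{\gamma},\Gamma^*)\ge|\bar Z_2|^{1/3}$ is of exact order $n^{-1/6}$, not $O_P(n^{-1/2})$. Here the residual lies in $\ker Q^\top$, $\|\nabla J\|$ vanishes cubically along it, and the rank of $H$ jumps from $1$ to $2$ off $\Gamma^*$.

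The same degeneracy occurs just beyond any relative-boundary point of a positive-dimensional $\Gamma^*$ lying in the interior of $\Gamma$. Differentiating $\nabla J=0$ along segments of $\Gamma^*$ shows that $H$ annihilates the directions of its affine hull, including the outward one.

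So an additional hypothesis is unavoidable. The most direct is a local error bound $\|\nabla J(\gamma)\|\ge c\,d(\gamma,\Gamma^*)$ for $\gamma\in\Gamma$ near $\Gamma^*$. For positive-dimensional $\Gamma^*$ this forces $\Gamma^*$ to meet $\partial\Gamma$, so the interiority in Assumption~\ref{ass: compact} must be relaxed. With that bound, your final decomposition $\nabla J(\widehat{\gamma})=\Psi(\widehat{\gamma})-\{\Psi(\widehat{\gamma})-\nabla J(\widehat{\gamma})\}$ finishes the proof directly.
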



Theorem~\ref{thm: consistency_2_rate} explicitly accounts for the interplay between statistical estimation and early‑stopped optimization.
Two sources of errors are explicitly separated:
(i) the $n^{-1/2}$ term as a statistical error coming from our method;
(ii) the order of $\|\Psi(\widehat{\gamma})\|$ reflecting the fact that $\widehat{\gamma}$ only approximately solves the estimating equation (e.g., when optimization is stopped early).
Hence, our expansion is informative when $\widehat{\gamma}$ is computed via SGD with a finite number of steps:
the term $\|\Psi(\widehat{\gamma})\|$ captures the optimization error and reveals how it propagates into the parameter estimate.
The asymptotic normality of our estimator can also be constructed; see Theorem~\ref{thm: normality_2} in Appendix~\ref{subsec: additional_thm}.

\section{Experiments}\label{sec:experiments}



We evaluate the performance of our proposed perturbation framework through experiments spanning synthetic scenarios and real-world applications. 

\subsection{Experiments on synthetic data}

\textbf{Datasets}. 
We construct synthetic datasets from a perturbed bigram language model. 
The ground-truth data-generating process is adapted from a standard bigram model, which is characterized by an initial token distribution and a token-to-token transition matrix $M_0\in\mathbb{R}^{|\mathcal{V}|\times |\mathcal{V}|}$, where $|\mathcal{V}|$ denotes the vocabulary size. 
We generate the initial token distribution uniformly over the vocabulary, and independently sample each row of the transition matrix $M_0$ from a Dirichlet distribution with concentration parameter \(0.5\). 
To introduce continuous perturbations, we additionally sample a Gaussian noise variable at each time step and use a fixed neural network \(T_{\beta_0}\) to generate an embedding-level perturbation based on the previous token embedding. 
The perturbed embedding is then used to generate the next token. 
The scalar parameter \(\alpha\) controls the perturbation strength, where \(\alpha=0\) corresponds to the standard bigram model without perturbation.
We sample \(500\) sequences from this perturbed transition process to form the training dataset \(\mathcal D\), where each sequence has maximum length \(10\). 
Additional details on the data generation process are provided in Appendix~\ref{appsec:sim_details}.

\textbf{Training and evaluation}.
We estimate the model by combining:
(i) $P_{\theta}$, a neural bigram language model \citep{zhang2023mixce} where the conditional distribution of the next token is parameterized by a feedforward neural network acting on the embedding of the previous token;
and (ii) $T_{\beta}$, a recurrent context encoder together with a multilayer perceptron that transforms a low-dimensional latent perturbation variable and the contextual embedding representation into a perturbation matrix in the embedding space.

Hereafter, we refer to the estimation procedure without the last term subtracted in \eqref{eqn: def_psi} as \textit{without debiasing}; otherwise as \textit{with debiasing}.
We implement our framework under three configurations: without debiasing, and with debiasing activated from the $10$-th and $20$-th optimization steps, respectively. 
As a competing baseline, we also implement the discrete perturbation method by \cite{Cenetal2026} with perturbation intensity parameters $\{0, 0.2, 0.6, 1\}$. 
More details regarding $P_{\theta}$, $T_{\beta}$, and the training procedures are included in Appendix~\ref{appsec:sim_details}. 
For each method, we estimate a transition matrix over the observed token space and compare it with the oracle matrix \(M_\alpha\),
and the extrapolation performance is evaluated by its mean absolute error (MAE) over token pairs unseen from the training corpus.

\textbf{Results}.
Figure~\ref{fig:sim_MAE} shows that the proposed trainable perturbation methods consistently outperform the discrete perturbation baselines across all vocabulary sizes. In addition, the continuous perturbation methods exhibit more stable MAEs as the vocabulary size changes, whereas the discrete perturbation methods are more sensitive to the perturbation intensity. 
For the proposed methods, the results suggest that several initial optimization steps without debiasing are helpful for providing a good warm start before activating debiasing. 
When the vocabulary size is moderate, debiasing generally improves performance,
whereas the vocabulary size becomes large, the warm start after $20$ steps may be insufficient.
Finally, although the proposed methods consistently maintain an advantage over the discrete perturbation baselines, the performance gap gradually decreases as the vocabulary size increases. One possible explanation is that, with a fixed embedding dimension, larger vocabularies make the embedding space less informative for smooth extrapolation.

\begin{figure}[ht]
    \centering
    \includegraphics[width = 1\linewidth]{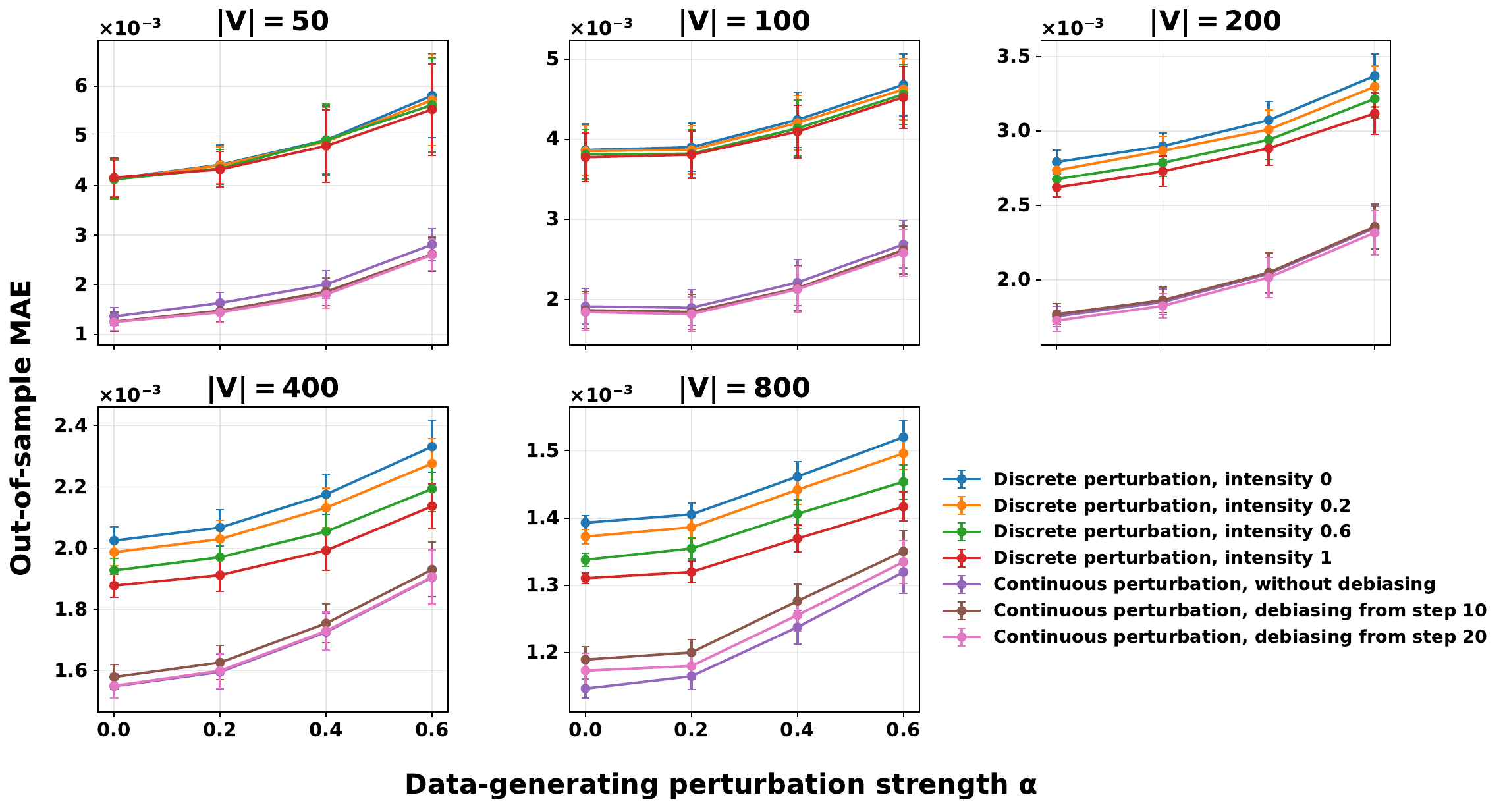}
    \caption{Out-of-sample MAEs of estimated transition matrices under varying data-generating perturbation strengths \(\alpha\). Panels correspond to different vocabulary sizes \(|\mathcal{V}|=50,100,200,400,800\). Results are reported as mean MAEs \(\pm 2\) standard errors over \(10\) independent replications.}
    \label{fig:sim_MAE}
\end{figure}

\subsection{Experiments on real-world data}\label{subsec:real_data}

\textbf{Datasets}. 
Following the experimental protocol in \cite{ren2024emo}, we train all models on the WikiText-2 corpus \citep{merity2017pointer}. To evaluate the generalization ability of the learned models, we assess the trained models on five external datasets spanning diverse domains and writing styles. Specifically, these datasets include: 
(i) WikiText-103 \citep{merity2017pointer}, a substantially larger Wikipedia-based benchmark containing over fifty times more tokens than WikiText-2; 
(ii) the test split of the WebText dataset introduced by OpenAI \citep{radford2019language}, consisting of web documents collected from outbound Reddit links; 
(iii) WritingPrompts \citep{fan2018hierarchical}, a long-form story generation dataset collected from Reddit writing prompt communities; 
(iv) CodeParrot \citep{xu2022systematic}, a large-scale source code dataset constructed from publicly available GitHub repositories; 
and (v) GermanQuAD \citep{moller2021germanquad}, a German question answering benchmark derived from Wikipedia articles. 
In addition, we evaluate the model performance on the WikiText-2 test split.
Markedly, the WikiText-2 test split and WikiText-103 are close to the training data, whereas the other datasets generally contain texts with different semantic contexts.

\textbf{Training and evaluation}. 
Three representative decoder-only Transformer architectures are experimented as base models: OPT \citep{zhang2022opt}, Qwen3 \citep{yang2025qwen3}, and GPT-Neo \citep{black2021gpt}. 
We implement the proposed continuous perturbation framework using a perturbation neural network consisting of an LSTM context encoder and a multilayer perceptron perturbation generator. 
We consider two configurations of the proposed method: without debiasing and with debiasing activated from the \(2000\)-th optimization step. 
We compare the proposed continuous perturbation framework with standard MLE, the discrete perturbation method of \cite{Cenetal2026}, and the embedding-level perturbation approach NEFTune \citep{jain2024neftune}. 
For further details on the implementation, we refer the reader to Appendix~\ref{appsec:real_details}. 
We primarily evaluate language modeling performance using perplexity \citep{jelinek1977perplexity}, where smaller values indicate more accurate next-token prediction and lower predictive uncertainty on the target distribution. 
To further assess the distributional similarity between generated text and human-written text, we report the Mauve score \citep{pillutla2023mauve}, which compares the two distributions through the area under the KL-divergence frontier. 
We also compute the ROUGE-1 score \citep{lin2004rouge} to quantify unigram overlap between generated outputs and reference texts; the results are relegated to Table~\ref{tab:r1_results} in Appendix~\ref{appsec:numerical}.

\textbf{Results}. 
Table~\ref{tab:PPL_mauve_results} reports the perplexity (PPL) and Mauve results across different base language models and evaluation datasets. 
Comparing the two configurations of our method, the results with debiasing generally achieve slightly lower perplexity when the base model is OPT. It also remains competitive and stable over all settings, suggesting that the debiasing correction improves the estimation accuracy and optimization stability of the perturbation objective.

More importantly, compared to other methods, our proposed framework consistently achieves substantially lower perplexity than the baseline methods in all settings.
Meanwhile, our methods often attain competitive or superior Mauve scores on out-of-domain datasets such as WritingPrompts and GermanQuAD.
These indicate improved language modeling and generalization capability. 
On the in-domain datasets WikiText-2 and WikiText-103, however, the continuous perturbation methods do not always achieve the highest Mauve scores, possibly because the perturbation encourages exploration beyond the original training distribution rather than optimizing similarity to the in-domain reference distribution.

\begin{table}[ht]
\centering
\caption{
Comparison of PPL and Mauve scores across different base language models and evaluation datasets. Lower PPL and higher Mauve indicate better performance. Best results within each base model are highlighted in bold. ``Vanilla'' denotes the original base model without perturbation, while ``w/o DB'' and ``w/ DB'' stand for without debiasing and with debiasing, respectively.
}
\small
\setlength{\tabcolsep}{4.5pt}
\renewcommand{\arraystretch}{1.15}

\resizebox{\textwidth}{!}{
\begin{tabular}{ll|cc|cc|cc|cc|cc|cc}
\toprule

\multirow{2}{*}{Model} & \multirow{2}{*}{Method}
& \multicolumn{2}{c|}{WikiText-2}
& \multicolumn{2}{c|}{WikiText-103}
& \multicolumn{2}{c|}{WebText}
& \multicolumn{2}{c|}{WritingPrompts}
& \multicolumn{2}{c|}{CodeParrot}
& \multicolumn{2}{c}{GermanQuAD} \\

\cline{3-14}

&& PPL & Mauve
& PPL & Mauve
& PPL & Mauve
& PPL & Mauve
& PPL & Mauve
& PPL & Mauve \\

\midrule

\multirow{5}{*}{OPT}

& Vanilla
& 145 & 0.77
& 145 & 0.77
& 226 & 0.33
& 164 & 0.13
& 602 & 0.17
& 604 & 0.47 \\

& Discrete
& 159 & \textbf{0.79}
& 159 & \textbf{0.79}
& 251 & 0.34
& 179 & 0.15
& 751 & 0.14
& 602 & 0.71 \\

& NEFTune
& 150 & \textbf{0.79}
& 150 & \textbf{0.79}
& 204 & 0.37
& 160 & 0.16
& 464 & 0.23
& 491 & 0.82 \\

& Cont. (w/o DB)
& 132 & 0.60
& 132 & 0.60
& 106 & 0.65
& 125 & \textbf{0.21}
& 293 & \textbf{0.26}
& 242 & \textbf{0.97} \\

& Cont. (w/ DB)
& \textbf{129} & 0.62
& \textbf{129} & 0.62
& \textbf{104} & \textbf{0.66}
& \textbf{121} & 0.20
& \textbf{289} & \textbf{0.26}
& \textbf{235} & 0.96 \\

\midrule

\multirow{5}{*}{Qwen3}

& Vanilla
& 97 & 0.76
& 97 & 0.76
& 156 & 0.43
& 120 & 0.17
& 40 & \textbf{0.50}
& 174 & \textbf{0.96} \\

& Discrete
& 113 & 0.75
& 113 & 0.75
& 191 & 0.38
& 148 & 0.21
& 43 & 0.36
& 206 & \textbf{0.96} \\

& NEFTune
& 95 & \textbf{0.77}
& 95 & \textbf{0.77}
& 129 & 0.58
& 113 & 0.28
& 27 & \textbf{0.50}
& 127 & 0.95 \\

& Cont. (w/o DB)
& \textbf{72} & 0.51
& \textbf{72} & 0.51
& \textbf{99} & \textbf{0.68}
& \textbf{93} & \textbf{0.32}
& \textbf{22} & 0.45
& \textbf{117} & 0.95 \\

& Cont. (w/ DB)
& 73 & 0.50
& 73 & 0.50
& \textbf{99} & 0.67
& \textbf{93} & 0.30
& \textbf{22} & 0.44
& \textbf{117} & 0.95 \\

\midrule

\multirow{5}{*}{GPT-Neo}

& Vanilla
& 78 & \textbf{0.83}
& 78 & \textbf{0.83}
& 110 & 0.47
& 101 & 0.24
& 55 & 0.18
& 189 & 0.94 \\

& Discrete
& 87 & 0.81
& 87 & 0.81
& 131 & 0.45
& 123 & 0.23
& 68 & 0.16
& 196 & 0.96 \\

& NEFTune
& 77 & 0.69
& 77 & 0.69
& 68 & 0.67
& 102 & \textbf{0.30}
& \textbf{26} & \textbf{0.74}
& 131 & 0.95 \\

& Cont. (w/o DB)
& 77 & 0.68
& 77 & 0.68
& \textbf{66} & \textbf{0.68}
& \textbf{97} & 0.29
& 28 & 0.57
& \textbf{127} & \textbf{0.97} \\

& Cont. (w/ DB)
& \textbf{76} & 0.68
& \textbf{76} & 0.68
& \textbf{66} & \textbf{0.68}
& \textbf{97} & 0.29
& 28 & 0.58
& \textbf{127} & 0.96 \\

\bottomrule
\end{tabular}
}

\label{tab:PPL_mauve_results}
\end{table}

\subsection{An ablation study}\label{subsec:ablation}

As our framework incorporates perturbations during both model training and inference, 
we examine the effect of perturbation at each stage by conducting an ablation study on the real-world datasets described in Section~\ref{subsec:real_data}. 
Specifically, we compare with two modified variants of our framework:
(i) TrainPerturb, where perturbations are introduced only during training; 
and (ii) TestPerturb, where the model is trained on the original data and perturbations are applied only during inference.

Figure~\ref{fig:ablation_gptneo} presents the differences in PPL and Mauve relative to the proposed continuous perturbation framework (Ours) on the GermanQuAD dataset using GPT-Neo as the base model for three methods: the standard model without perturbation (NoPerturb), and the two ablation variants described above (TrainPerturb and TestPerturb).
The vertical axis represents the difference of metric values between each method and Ours. 
Since lower PPL and higher Mauve indicate better generation quality, positive differences in PPL and negative differences in Mauve suggest improvements of the proposed method over the corresponding baselines. 
The results indicate that our framework achieves substantially lower PPL together with consistently higher Mauve scores than NoPerturb and TestPerturb, suggesting that incorporating perturbation during training is essential for improving extrapolative performance. 
Compared with TrainPerturb, our proposed method achieves better average PPL and Mauve performance, although the improvement is milder than those seen in NoPerturb and TestPerturb.
These observations suggest that applying perturbation only during training mainly acts as a form of data augmentation and cannot fully leverage the benefit of the perturbation mechanism. 
Similar patterns are also seen by using OPT as the base model,
as shown in Figure~\ref{fig:ablation_opt}.
Thus, jointly introducing perturbations during both training and inference yields better overall extrapolative generation performance.

\begin{figure}[!ht]
    \centering
    \begin{subfigure}{0.9\linewidth}
        \centering
        \includegraphics[width=\linewidth]
        {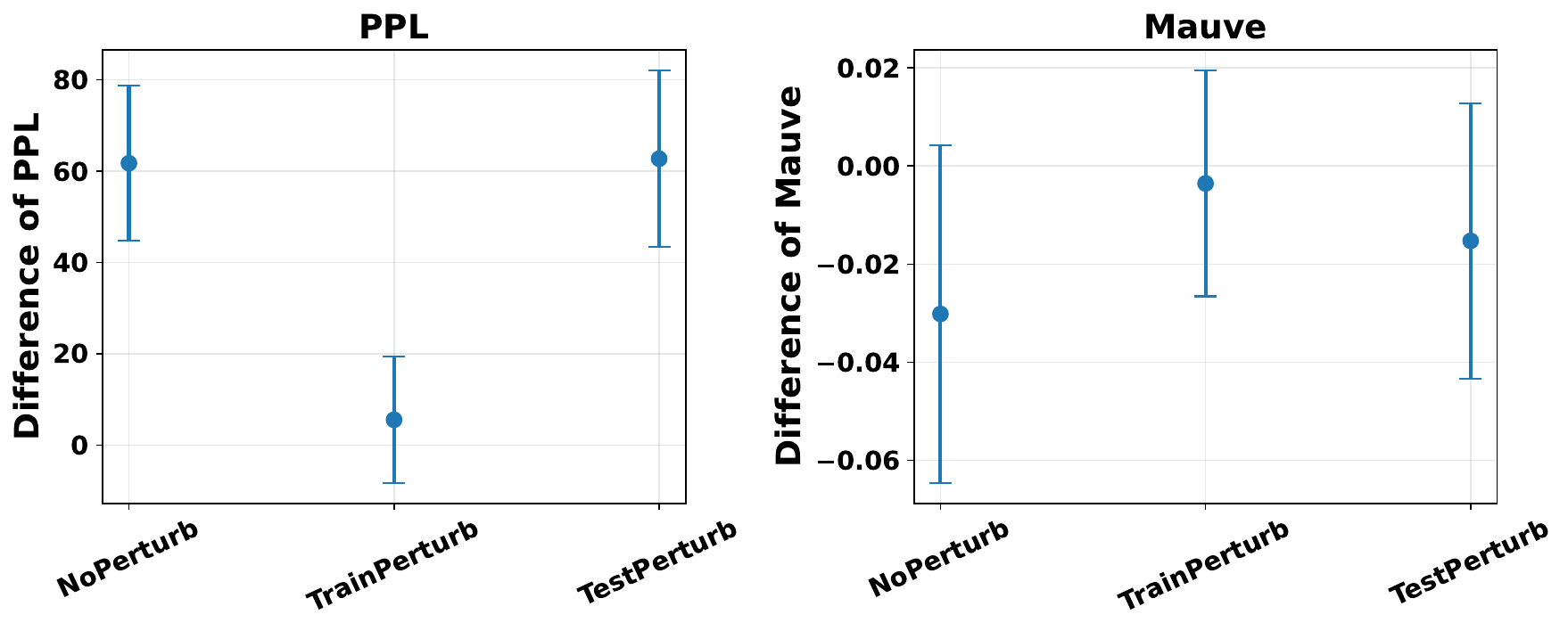}
        \caption{GPT-Neo as the base model.}
        \label{fig:ablation_gptneo}
    \end{subfigure}

    \vspace{0.8em}

    \begin{subfigure}{0.9\linewidth}
        \centering
        \includegraphics[width=\linewidth]
        {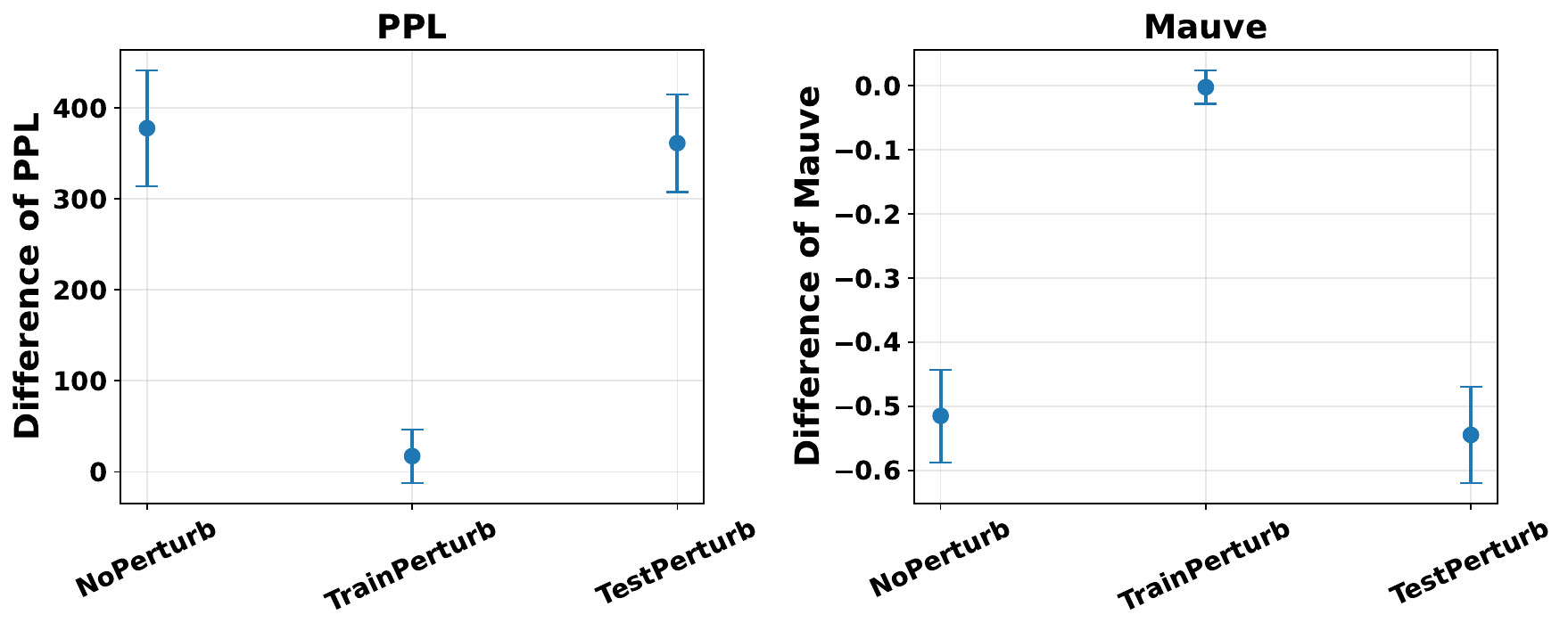}
        \caption{OPT as the base model.}
        \label{fig:ablation_opt}
    \end{subfigure}

    \caption{
    Real-data ablation study showing the differences in PPL and Mauve relative to the proposed method across $50$ independent repetitions. The models are trained on WikiText-2 and evaluated on the GermanQuAD dataset. Points denote the mean over repetitions, and error bars correspond to 95\% confidence intervals.
    }
    
    \label{fig:ablation}
\end{figure}

\section{Concluding Remarks}\label{sec:conclusion}

We introduce a trainable and continuous perturbation process for the perturbed autoregressive language models, by leveraging an unbiased estimating equation that circumvents the intractable marginal likelihood.
Under some regularity conditions, the estimator is consistent with a desired rate of convergence.
Numerical studies demonstrate strong extrapolation and out-of-domain generation performance across a range of language generation tasks and base language models.



\bibliographystyle{apalike}
\bibliography{ref}

@article{xu2026doubly,
  title={Doubly robust alignment for large language models},
  author={Xu, Erhan and Ye, Kai and Zhou, Hongyi and Zhu, Luhan and Quinzan, Francesco and Shi, Chengchun},
  journal={Advances in Neural Information Processing Systems},
  volume={38},
  pages={6741--6790},
  year={2025}
}

@article{xie2026statistical,
  title={Statistical Early Stopping for Reasoning Models},
  author={Xie, Yangxinyu and Wang, Tao and Mallick, Soham and Sun, Yan and Noarov, Georgy and Yu, Mengxin and Mallick, Tanwi and Su, Weijie J and Dobriban, Edgar},
  journal={arXiv preprint arXiv:2602.13935},
  year={2026}
}

@article{shao2024deepseekmath,
  title={Deepseekmath: Pushing the limits of mathematical reasoning in open language models},
  author={Shao, Zhihong and Wang, Peiyi and Zhu, Qihao and Xu, Runxin and Song, Junxiao and Bi, Xiao and Zhang, Haowei and Zhang, Mingchuan and Li, YK and Wu, Yang and others},
  journal={arXiv preprint arXiv:2402.03300},
  year={2024}
}

@article{xia2026statistical,
  title={A statistical framework for alignment with biased ai feedback},
  author={Xia, Xintao and Xia, Zhiqiu and Zhang, Linjun and Cai, Zhanrui},
  journal={arXiv preprint arXiv:2602.08259},
  year={2026}
}

@article{rafailov2023direct,
  title={Direct preference optimization: Your language model is secretly a reward model},
  author={Rafailov, Rafael and Sharma, Archit and Mitchell, Eric and Manning, Christopher D and Ermon, Stefano and Finn, Chelsea},
  journal={Advances in neural information processing systems},
  volume={36},
  pages={53728--53741},
  year={2023}
}

@article{liu2024dual,
  title={Dual active learning for reinforcement learning from human feedback},
  author={Liu, Pangpang and Shi, Chengchun and Sun, Will Wei},
  journal={arXiv preprint arXiv:2410.02504},
  year={2024}
}

@article{NeweyMcFadden1994,
  title={Large sample estimation and hypothesis testing},
  author={Newey, Whitney K and McFadden, Daniel},
  journal={Handbook of econometrics},
  volume={4},
  pages={2111--2245},
  year={1994},
  publisher={Elsevier}
}

@book{Vaart1998, 
place={Cambridge}, 
series={Cambridge Series in Statistical and Probabilistic Mathematics}, 
title={Asymptotic Statistics}, 
publisher={Cambridge University Press}, 
author={Vaart, A. W. van der}, 
year={1998}, 
collection={Cambridge Series in Statistical and Probabilistic Mathematics}
}

@article{Zhouetal2026,
  title={Demystifying Group Relative Policy Optimization: Its Policy Gradient is a U-Statistic},
  author={Hongyi Zhou and Kai Ye and Erhan Xu and Jin Zhu and Ying Yang and Shijin Gong and Chengchun Shi},
  journal={arXiv preprint arXiv:2603.01162v3},
  year={2026}
}

@book{VaartWellner2023,
  title={Weak Convergence and Empirical Processes},
  author={A. W. van der Vaart and Jon A. Wellner},
  year={2023},
  publisher={Springer Cham}
}

@inproceedings{zhang2023mixce,
  title={MixCE: Training autoregressive language models by mixing forward and reverse cross-entropies},
  author={Zhang, Shiyue and Wu, Shijie and Irsoy, Ozan and Lu, Steven and Bansal, Mohit and Dredze, Mark and Rosenberg, David},
  booktitle={Proceedings of the 61st Annual Meeting of the Association for Computational Linguistics (Volume 1: Long Papers)},
  pages={9027--9050},
  year={2023}
}

@techreport{radford2018improving,
  title={Improving language understanding by generative pre-training},
  author={Radford, Alec and Narasimhan, Karthik and Salimans, Tim and Sutskever, Ilya},
  year={2018},
  institution={OpenAI}
}

@techreport{radford2019language,
  title={Language models are unsupervised multitask learners},
  author={Radford, Alec and Wu, Jeff and Child, Rewon and Luan, David and Amodei, Dario and Sutskever, Ilya},
  year={2019},
  institution={OpenAI}
}

@article{zhang2022opt,
      title={OPT: Open Pre-trained Transformer Language Models}, 
      author={Susan Zhang and Stephen Roller and Naman Goyal and Mikel Artetxe and Moya Chen and Shuohui Chen and Christopher Dewan and Mona Diab and Xian Li and Xi Victoria Lin and Todor Mihaylov and Myle Ott and Sam Shleifer and Kurt Shuster and Daniel Simig and Punit Singh Koura and Anjali Sridhar and Tianlu Wang and Luke Zettlemoyer},
      year={2022},
      journal={arXiv preprint arXiv:2205.01068v4}
}

@inproceedings{ji2023tailoring,
    title={Tailoring Language Generation Models under Total Variation Distance},
    author={Haozhe Ji and Pei Ke and Zhipeng Hu and Rongsheng Zhang and Minlie Huang},
    booktitle={The Eleventh International Conference on Learning Representations },
    year={2023}
}

@inproceedings{ren2024emo,
title={{EMO}: EARTH MOVER DISTANCE OPTIMIZATION FOR AUTO-REGRESSIVE LANGUAGE MODELING},
author={Siyu Ren and Zhiyong Wu and Kenny Q. Zhu},
booktitle={The Twelfth International Conference on Learning Representations},
year={2024}
}

@inproceedings{feng2020genaug,
    title = "{G}en{A}ug: Data Augmentation for Finetuning Text Generators",
    author = "Feng, Steven Y.  and
      Gangal, Varun  and
      Kang, Dongyeop  and
      Mitamura, Teruko  and
      Hovy, Eduard",
    editor = "Agirre, Eneko  and
      Apidianaki, Marianna  and
      Vuli{\'c}, Ivan",
    booktitle = "Proceedings of Deep Learning Inside Out (DeeLIO): The First Workshop on Knowledge Extraction and Integration for Deep Learning Architectures",
    month = nov,
    year = "2020",
    address = "Online",
    publisher = "Association for Computational Linguistics",
    doi = "10.18653/v1/2020.deelio-1.4",
    pages = "29--42",
}

@inproceedings{feng2019keep,
    title = "Keep Calm and Switch On! Preserving Sentiment and Fluency in Semantic Text Exchange",
    author = "Feng, Steven Y.  and
      Li, Aaron W.  and
      Hoey, Jesse",
    editor = "Inui, Kentaro  and
      Jiang, Jing  and
      Ng, Vincent  and
      Wan, Xiaojun",
    booktitle = "Proceedings of the 2019 Conference on Empirical Methods in Natural Language Processing and the 9th International Joint Conference on Natural Language Processing (EMNLP-IJCNLP)",
    month = nov,
    year = "2019",
    address = "Hong Kong, China",
    publisher = "Association for Computational Linguistics",
    doi = "10.18653/v1/D19-1272",
    pages = "2701--2711",
}

@inproceedings{wei2019eda,
    title = "{EDA}: Easy Data Augmentation Techniques for Boosting Performance on Text Classification Tasks",
    author = "Wei, Jason  and
      Zou, Kai",
    editor = "Inui, Kentaro  and
      Jiang, Jing  and
      Ng, Vincent  and
      Wan, Xiaojun",
    booktitle = "Proceedings of the 2019 Conference on Empirical Methods in Natural Language Processing and the 9th International Joint Conference on Natural Language Processing (EMNLP-IJCNLP)",
    month = nov,
    year = "2019",
    address = "Hong Kong, China",
    publisher = "Association for Computational Linguistics",
    doi = "10.18653/v1/D19-1670",
    pages = "6382--6388"
}

@inproceedings{merity2017pointer,
title={Pointer sentinel mixture models},
author={Merity, Stephen and Xiong, Caiming and Bradbury, James and Socher, Richard},
booktitle={5th International Conference on Learning Representations},
year={2017}
}

@inproceedings{fan2018hierarchical,
  title={Hierarchical neural story generation},
  author={Fan, Angela and Lewis, Mike and Dauphin, Yann},
  booktitle={Proceedings of the 56th Annual Meeting of the Association for Computational Linguistics (Volume 1: Long Papers)},
  pages={889--898},
  year={2018}
}

@inproceedings{xu2022systematic,
  title={A systematic evaluation of large language models of code},
  author={Xu, Frank F and Alon, Uri and Neubig, Graham and Hellendoorn, Vincent Josua},
  booktitle={Proceedings of the 6th ACM SIGPLAN international symposium on machine programming},
  pages={1--10},
  year={2022}
}

@inproceedings{moller2021germanquad,
  title={GermanQuAD and GermanDPR: Improving non-English question answering and passage retrieval},
  author={M{\"o}ller, Timo and Risch, Julian and Pietsch, Malte},
  booktitle={Proceedings of the 3rd workshop on machine reading for question answering},
  pages={42--50},
  year={2021}
}

@inproceedings{kingma2015adam,
title={Adam: A method for stochastic optimization},
author={Kingma, Diederik P and Ba, Jimmy},
booktitle={3rd International Conference on Learning Representations},
year={2015}
}

@article{black2021gpt,
  title={Gpt-neo: Large scale autoregressive language modeling with mesh-tensorflow},
  author={Black, Sid and Leo, Gao and Wang, Phil and Leahy, Connor and Biderman, Stella},
  journal={Zenodo},
  year={2021}
}

@article{Cenetal2026,
  title={Perturbation is All You Need for Extrapolating Language Models},
  author={Zetai Cen and Jin Zhu and Xinwei Shen and Chengchun Shi},
  journal={arXiv preprint arXiv:2605.04344v1},
  year={2026}
}

@inproceedings{Lietal2024,
title={Empowering Large Language Models for Textual Data Augmentation},
author={Yichuan Li and Kaize Ding and Jianling Wang and Kyumin Lee},
booktitle={ICLR 2024 Workshop on Data-centric Machine Learning Research (DMLR): Harnessing Momentum for Science},
year={2024},
url={https://openreview.net/forum?id=JCuWXk9HGq}
}

@article{LiangZeger1986,
    author = {Liang, Kung-Yee and Zeger, Scott L.},
    title = {Longitudinal data analysis using generalized linear models},
    journal = {Biometrika},
    volume = {73},
    number = {1},
    pages = {13-22},
    year = {1986},
    month = {04},
    issn = {0006-3444},
    doi = {10.1093/biomet/73.1.13},
    url = {https://doi.org/10.1093/biomet/73.1.13},
    eprint = {https://academic.oup.com/biomet/article-pdf/73/1/13/679793/73-1-13.pdf},
}

@article{Songetal2024,
author = {Shanshan Song and Yuanyuan Lin and Yong Zhou},
title = {A General M-estimation Theory in Semi-Supervised Framework},
journal = {Journal of the American Statistical Association},
volume = {119},
number = {546},
pages = {1065--1075},
year = {2024},
publisher = {Taylor \& Francis},
doi = {10.1080/01621459.2023.2169699},
URL = { https://doi.org/10.1080/01621459.2023.2169699
},
eprint = { https://doi.org/10.1080/01621459.2023.2169699
}
}

@article{YanChen2024,
  title={Transfer Learning with General Estimating Equations},
  author={Han Yan and Song Xi Chen},
  journal={arXiv preprint arXiv:2410.04398v1},
  year={2024}
}

@article{Hyvarinen2005,
  author  = {Aapo Hyv{{\"a}}rinen},
  title   = {Estimation of Non-Normalized Statistical Models by Score Matching},
  journal = {Journal of Machine Learning Research},
  year    = {2005},
  volume  = {6},
  number  = {24},
  pages   = {695--709},
  url     = {http://jmlr.org/papers/v6/hyvarinen05a.html}
}

@InProceedings{GutmannHyvarinen2010,
  title = 	 {Noise-contrastive estimation: A new estimation principle for unnormalized statistical models},
  author = 	 {Gutmann, Michael and Hyvärinen, Aapo},
  booktitle = 	 {Proceedings of the Thirteenth International Conference on Artificial Intelligence and Statistics},
  pages = 	 {297--304},
  year = 	 {2010},
  editor = 	 {Teh, Yee Whye and Titterington, Mike},
  volume = 	 {9},
  series = 	 {Proceedings of Machine Learning Research},
  address = 	 {Chia Laguna Resort, Sardinia, Italy},
  month = 	 {13--15 May},
  publisher =    {PMLR},
  url = 	 {https://proceedings.mlr.press/v9/gutmann10a.html}
}

@inproceedings{Chenetal2024,
title={Noise Contrastive Alignment of Language Models with Explicit Rewards},
author={Huayu Chen and Guande He and Lifan Yuan and Ganqu Cui and Hang Su and Jun Zhu},
booktitle={The Thirty-eighth Annual Conference on Neural Information Processing Systems},
year={2024},
url={https://openreview.net/forum?id=KwRLDkyVOl}
}

@inproceedings{Songetal2021,
  author       = {Yang Song and
                  Jascha Sohl{-}Dickstein and
                  Diederik P. Kingma and
                  Abhishek Kumar and
                  Stefano Ermon and
                  Ben Poole},
  title        = {Score-Based Generative Modeling through Stochastic Differential Equations},
  booktitle    = {9th International Conference on Learning Representations, {ICLR} 2021,
                  Virtual Event, Austria, May 3-7, 2021},
  publisher    = {OpenReview.net},
  year         = {2021},
  url          = {https://openreview.net/forum?id=PxTIG12RRHS},
  bibsource    = {dblp computer science bibliography, https://dblp.org}
}

@inproceedings{Louetal2024,
author = {Lou, Aaron and Meng, Chenlin and Ermon, Stefano},
title = {Discrete diffusion modeling by estimating the ratios of the data distribution},
year = {2024},
publisher = {JMLR.org},
booktitle = {Proceedings of the 41st International Conference on Machine Learning},
articleno = {1333},
numpages = {30},
location = {Vienna, Austria},
series = {ICML'24}
}

@inproceedings{jain2024neftune,
  title={Neftune: Noisy embeddings improve instruction finetuning},
  author={Jain, Neel and Chiang, Ping-yeh and Wen, Yuxin and Kirchenbauer, John and Chu, Hong-Min and Somepalli, Gowthami and Bartoldson, Brian and Kailkhura, Bhavya and Schwarzschild, Avi and Saha, Aniruddha and others},
  booktitle={International Conference on Learning Representations},
  volume={2024},
  pages={17566--17591},
  year={2024}
}

@techreport{yang2025qwen3,
  title={Qwen3 Technical Report},
  author={Yang, An and Li, Anfeng and Yang, Baosong and Zhang, Beichen and Hui, Binyuan and Zheng, Bo and Yu, Bowen and Gao, Chang and Huang, Chengen and Lv, Chenxu and others},
  year={2025},
  institution={Alibaba Group},
  note={arXiv:2505.09388}
}

@article{pillutla2023mauve,
  title={Mauve scores for generative models: Theory and practice},
  author={Pillutla, Krishna and Liu, Lang and Thickstun, John and Welleck, Sean and Swayamdipta, Swabha and Zellers, Rowan and Oh, Sewoong and Choi, Yejin and Harchaoui, Zaid},
  journal={Journal of Machine Learning Research},
  volume={24},
  number={356},
  pages={1--92},
  year={2023}
}

@inproceedings{lin2004rouge,
  title={Rouge: A package for automatic evaluation of summaries},
  author={Lin, Chin-Yew},
  booktitle={Text summarization branches out},
  pages={74--81},
  year={2004}
}

@article{jelinek1977perplexity,
  title={Perplexity—a measure of the difficulty of speech recognition tasks},
  author={Jelinek, Fred and Mercer, Robert L and Bahl, Lalit R and Baker, James K},
  journal={The journal of the Acoustical Society of America},
  volume={62},
  number={S1},
  pages={S63--S63},
  year={1977},
  publisher={Acoustical Society of America}
}

@inproceedings{loshchilov2019decoupled,
  title={Decoupled weight decay regularization},
  author={Loshchilov, Ilya and Hutter, Frank},
  booktitle={International Conference on Learning Representations},
  year={2019}
}

@article{openai2025gpt,
  title={Openai gpt-5 system card},
  author={Singh, Aaditya and Fry, Adam and Perelman, Adam and Tart, Adam and Ganesh, Adi and El-Kishky, Ahmed and McLaughlin, Aidan and Low, Aiden and Ostrow, AJ and Ananthram, Akhila and others},
  journal={arXiv preprint arXiv:2601.03267},
  year={2025}
}

@article{xiao2025algorithmic,
  title     = {On the Algorithmic Bias of Aligning Large Language Models with {RLHF}: Preference Collapse and Matching Regularization},
  author    = {Xiao, Jiancong and Li, Ziniu and Xie, Xingyu and Getzen, Emily and Fang, Cong and Long, Qi and Su, Weijie},
  journal   = {Journal of the American Statistical Association},
  volume    = {120},
  number    = {552},
  pages     = {2154--2164},
  year      = {2025},
  publisher = {Taylor \& Francis}
}

@article{gong2026kernelized,
  title={Kernelized Advantage Estimation: From Nonparametric Statistics to LLM Reasoning},
  author={Gong, Shijin and Ye, Kai and Zhu, Jin and Zhang, Xinyu and Zhou, Hongyi and Shi, Chengchun},
  journal={arXiv preprint arXiv:2604.28005},
  year={2026}
}

@article{huang2026learning,
  title={On the Learning Dynamics of RLVR at the Edge of Competence},
  author={Huang, Yu and Wen, Zixin and Chi, Yuejie and Wei, Yuting and Singh, Aarti and Liang, Yingbin and Chen, Yuxin},
  journal={arXiv preprint arXiv:2602.14872},
  year={2026}
}

@inproceedings{eikema2020map,
  title={Is MAP decoding all you need? the inadequacy of the mode in neural machine translation},
  author={Eikema, Bryan and Aziz, Wilker},
  booktitle={Proceedings of the 28th International Conference on Computational Linguistics},
  pages={4506--4520},
  year={2020}
}

@inproceedings{Lietal2023_perturbscore,
    title = "{P}erturb{S}core: Connecting Discrete and Continuous Perturbations in {NLP}",
    author = "Li, Linyang  and
      Ren, Ke  and
      Shao, Yunfan  and
      Wang, Pengyu  and
      Qiu, Xipeng",
    editor = "Bouamor, Houda  and
      Pino, Juan  and
      Bali, Kalika",
    booktitle = "Findings of the Association for Computational Linguistics: EMNLP 2023",
    month = dec,
    year = "2023",
    address = "Singapore",
    publisher = "Association for Computational Linguistics",
    url = "https://aclanthology.org/2023.findings-emnlp.442/",
    doi = "10.18653/v1/2023.findings-emnlp.442",
    pages = "6638--6648"
}

@inproceedings{Parketal2022,
    title = "Consistency Training with Virtual Adversarial Discrete Perturbation",
    author = "Park, Jungsoo  and
      Kim, Gyuwan  and
      Kang, Jaewoo",
    editor = "Carpuat, Marine  and
      de Marneffe, Marie-Catherine  and
      Meza Ruiz, Ivan Vladimir",
    booktitle = "Proceedings of the 2022 Conference of the North American Chapter of the Association for Computational Linguistics: Human Language Technologies",
    month = jul,
    year = "2022",
    address = "Seattle, United States",
    publisher = "Association for Computational Linguistics",
    url = "https://aclanthology.org/2022.naacl-main.414/",
    doi = "10.18653/v1/2022.naacl-main.414",
    pages = "5646--5656"
}

@article{IsmailovAsanova2025,
  title={Small Edits, Big Consequences: Telling Good from Bad Robustness in Large Language Models},
  author={Altynbek Ismailov and Salia Asanova},
  journal={arXiv preprint arXiv:2507.15868v1},
  year={2025}
}

@inproceedings{Ouyangetal2022,
author = {Ouyang, Long and Wu, Jeff and Jiang, Xu and Almeida, Diogo and Wainwright, Carroll L. and Mishkin, Pamela and Zhang, Chong and Agarwal, Sandhini and Slama, Katarina and Ray, Alex and Schulman, John and Hilton, Jacob and Kelton, Fraser and Miller, Luke and Simens, Maddie and Askell, Amanda and Welinder, Peter and Christiano, Paul and Leike, Jan and Lowe, Ryan},
title = {Training language models to follow instructions with human feedback},
year = {2022},
isbn = {9781713871088},
publisher = {Curran Associates Inc.},
address = {Red Hook, NY, USA},
booktitle = {Proceedings of the 36th International Conference on Neural Information Processing Systems},
numpages = {15},
location = {New Orleans, LA, USA},
series = {NIPS '22}
}

@inproceedings{Iyyeretal2018,
    title = "Adversarial Example Generation with Syntactically Controlled Paraphrase Networks",
    author = "Iyyer, Mohit  and
      Wieting, John  and
      Gimpel, Kevin  and
      Zettlemoyer, Luke",
    editor = "Walker, Marilyn  and
      Ji, Heng  and
      Stent, Amanda",
    booktitle = "Proceedings of the 2018 Conference of the North {A}merican Chapter of the Association for Computational Linguistics: Human Language Technologies, Volume 1 (Long Papers)",
    month = jun,
    year = "2018",
    address = "New Orleans, Louisiana",
    publisher = "Association for Computational Linguistics",
    url = "https://aclanthology.org/N18-1170/",
    doi = "10.18653/v1/N18-1170",
    pages = "1875--1885"
}

@inproceedings{Ribeiroetal2018,
    title = "Semantically Equivalent Adversarial Rules for Debugging {NLP} models",
    author = "Ribeiro, Marco Tulio  and
      Singh, Sameer  and
      Guestrin, Carlos",
    editor = "Gurevych, Iryna  and
      Miyao, Yusuke",
    booktitle = "Proceedings of the 56th Annual Meeting of the Association for Computational Linguistics (Volume 1: Long Papers)",
    month = jul,
    year = "2018",
    address = "Melbourne, Australia",
    publisher = "Association for Computational Linguistics",
    url = "https://aclanthology.org/P18-1079/",
    doi = "10.18653/v1/P18-1079",
    pages = "856--865"
}

@inproceedings{Renetal2019generating,
    title = "Generating Natural Language Adversarial Examples through Probability Weighted Word Saliency",
    author = "Ren, Shuhuai  and
      Deng, Yihe  and
      He, Kun  and
      Che, Wanxiang",
    editor = "Korhonen, Anna  and
      Traum, David  and
      M{\`a}rquez, Llu{\'i}s",
    booktitle = "Proceedings of the 57th Annual Meeting of the Association for Computational Linguistics",
    month = jul,
    year = "2019",
    address = "Florence, Italy",
    publisher = "Association for Computational Linguistics",
    url = "https://aclanthology.org/P19-1103/",
    doi = "10.18653/v1/P19-1103",
    pages = "1085--1097"
}

@inproceedings{Zangetal2020word,
    title = "Word-level Textual Adversarial Attacking as Combinatorial Optimization",
    author = "Zang, Yuan  and
      Qi, Fanchao  and
      Yang, Chenghao  and
      Liu, Zhiyuan  and
      Zhang, Meng  and
      Liu, Qun  and
      Sun, Maosong",
    editor = "Jurafsky, Dan  and
      Chai, Joyce  and
      Schluter, Natalie  and
      Tetreault, Joel",
    booktitle = "Proceedings of the 58th Annual Meeting of the Association for Computational Linguistics",
    month = jul,
    year = "2020",
    address = "Online",
    publisher = "Association for Computational Linguistics",
    url = "https://aclanthology.org/2020.acl-main.540/",
    doi = "10.18653/v1/2020.acl-main.540",
    pages = "6066--6080"
}

@inproceedings{pruthi2019combating,
  title={Combating adversarial misspellings with robust word recognition},
  author={Pruthi, Danish and Dhingra, Bhuwan and Lipton, Zachary C},
  booktitle={Proceedings of the 57th Annual Meeting of the Association for Computational Linguistics},
  pages={5582--5591},
  year={2019}
}

@inproceedings{Zhu2020FreeLB,
title={FreeLB: Enhanced Adversarial Training for Natural Language Understanding},
author={Chen Zhu and Yu Cheng and Zhe Gan and Siqi Sun and Tom Goldstein and Jingjing Liu},
booktitle={International Conference on Learning Representations},
year={2020},
url={https://openreview.net/forum?id=BygzbyHFvB}
}

\clearpage
\appendix

\renewcommand{\thesection}{\Alph{section}}
\renewcommand{\theHsection}{\Alph{section}} 
\renewcommand{\thesubsection}{\Alph{section}.\arabic{subsection}}
\renewcommand{\theHsubsection}{\Alph{section}.\arabic{subsection}} 

\setcounter{table}{0}
\renewcommand{\thetable}{A\arabic{table}}
\setcounter{figure}{0}
\renewcommand{\thefigure}{A\arabic{figure}}

\begin{center}
{\LARGE\textbf{Supplement to ``Learning Perturbations to Extrapolate Your LLM''}}
\end{center}

\medskip

\section{Additional Numerical Results}\label{appsec:numerical}

\textbf{ROUGE-1 results in real data analysis}. 
Table~\ref{tab:r1_results} summarizes the ROUGE-1 results on the real-world language datasets described in Section~\ref{subsec:real_data}. The proposed continuous perturbation framework achieves competitive performance across different base language models and evaluation datasets.

Although the improvements in ROUGE-1 are generally more moderate than those in PPL and Mauve, the proposed methods consistently achieve competitive or superior ROUGE-1 scores across most evaluation settings, especially on out-of-domain datasets. These results suggest that the proposed framework improves not only likelihood modeling but also the lexical consistency and overall quality of generated continuations.

\begin{table}[!ht]
\centering
\caption{
ROUGE-1 scores across different base language models and evaluation datasets. Higher values indicate better performance. Best results within each base model and dataset are highlighted in bold.
Refer to the caption of Table~\ref{tab:PPL_mauve_results} for the explanation to notations.
}
\small
\setlength{\tabcolsep}{5pt}
\renewcommand{\arraystretch}{1.15}

\resizebox{\textwidth}{!}{
\begin{tabular}{ll|cccccc}
\toprule
Model & Method 
& WikiText-2 & WikiText-103 & WebText & WritingPrompts & CodeParrot & GermanQuAD \\
\midrule

\multirow{5}{*}{OPT}
& Vanilla        & \textbf{0.382} & \textbf{0.382} & 0.362          & \textbf{0.341} & 0.324          & 0.310 \\
& Discrete       & 0.381          & 0.381          & 0.363          & 0.340          & 0.321          & 0.320 \\
& NEFTune        & 0.381          & 0.381          & 0.365          & 0.339          & 0.334          & 0.323 \\
& Cont. (w/o DB) & 0.372          & 0.372          & \textbf{0.369} & 0.339          & \textbf{0.336} & \textbf{0.341} \\
& Cont. (w/ DB)  & 0.370          & 0.370          & \textbf{0.369} & 0.340          & \textbf{0.336} & \textbf{0.341} \\

\midrule

\multirow{5}{*}{Qwen3}
& Vanilla        & \textbf{0.391} & \textbf{0.391} & 0.371          & 0.344          & 0.448          & 0.355 \\
& Discrete       & 0.388          & 0.388          & 0.370          & 0.343          & 0.442          & 0.351 \\
& NEFTune        & 0.388          & 0.388          & \textbf{0.373} & 0.344          & 0.484          & \textbf{0.361} \\
& Cont. (w/o DB) & 0.384          & 0.384          & 0.372          & \textbf{0.346} & 0.494          & 0.360 \\
& Cont. (w/ DB)  & 0.384          & 0.384          & \textbf{0.373} & 0.345          & \textbf{0.495} & 0.360 \\

\midrule

\multirow{5}{*}{GPT-Neo}
& Vanilla        & \textbf{0.390}             & \textbf{0.390}             & 0.374             & \textbf{0.344}             & 0.409             & 0.342 \\
& Discrete       & \textbf{0.390} & \textbf{0.390} & 0.374          & 0.342          & 0.404          & 0.317 \\
& NEFTune        & 0.379          & 0.379          & \textbf{0.375} & 0.343          & 0.445          & \textbf{0.346} \\
& Cont. (w/o DB) & 0.378          & 0.378          & \textbf{0.375} & \textbf{0.344} & \textbf{0.452} & 0.344 \\
& Cont. (w/ DB)  & 0.378          & 0.378          & \textbf{0.375} & \textbf{0.344} & 0.451          & 0.323 \\

\bottomrule
\end{tabular}
}

\label{tab:r1_results}
\end{table}



\textbf{Examples of text generation}. 
We showcase several examples of generated texts from the real data analysis described in Section~\ref{subsec:real_data}. Table~\ref{tab:generated_examples_ascii} presents generations from different methods using GPT-Neo-1.3B as the base model, trained on WikiText-2 and evaluated on the GermanQuAD dataset. 
For this highly out-of-domain text generation task, the outputs generated by Vanilla and Discrete exhibit substantial semantic drift and severe degeneration behaviors. In particular, the generated texts contain many unnatural phrases, fragmented expressions, and unexpected symbols or punctuations such as ``@-@'', ``*00 00'', and irregular token combinations that are inconsistent with the original German context. The generated contents also deviate significantly from the original topic of ASCII character encoding and fail to maintain coherent semantic structures. 
Although NEFTune preserves part of the original topic by retaining terms related to ``Information Interchange'' and ``7-Bit-Zeichenkodierung'', it still produces mixed-language expressions involving Spanish phrases such as ``Codigo de los Estandardales de Información Interchange'', English expressions such as ``Canada's Universal'', and Czech-related terms such as ``Čeština'' within the same continuation, together with incoherent descriptions and unstable grammatical structures. 
In contrast, the proposed continuous perturbation methods generate texts that remain more semantically related to the original topic and exhibit relatively better structural coherence. 
Compared with the baseline methods, the generated texts contain fewer unexpected symbols and fragmented token patterns, while maintaining more stable sentence structures and topic consistency throughout the continuation.

\begin{table}[!ht]
\centering
\caption{Examples of text generated by different methods on the GermanQuAD dataset.}
\label{tab:generated_examples_ascii}
\small
\setlength{\tabcolsep}{3pt}
\renewcommand{\arraystretch}{1.15}

\begin{tabularx}{0.96\textwidth}{>{\bfseries}p{0.18\textwidth}X}
\toprule

\multicolumn{2}{c}{\cellcolor{blue!20}\textbf{Human Source}} \\
\midrule

Input text &
\foreignlanguage{ngerman}{American\_Standard\_Code\_for\_Information\_Interchange Der (ASCII, alternativ US-ASCII, oft ausgesprochen, ) ist eine 7-Bit-Zeichenkodierung; sie entspricht der US-Variante von ISO 646 und dient} \\

\midrule

\multicolumn{2}{c}{\cellcolor{gray!15}\textbf{Predictions}} \\
\midrule

Ground truth &
\foreignlanguage{ngerman}{als Grundlage für spätere, auf mehr Bits basierende Kodierungen für Zeichensätze. Der ASCII-Code wurde zuerst am 17. Juni 1963 von der American Standards Association (ASA) als Standard ASA X3.4-1963 gebilligt und 1967/1968 wesentlich sowie zuletzt im Jahr 1986 (ANSI X3.4-1986) von ihren Nachfolgeinstitutionen aktualisiert und wird bis heute noch benutzt. Die Zeichenkodierung definiert 128 Zeichen, bestehend aus 33 nicht druckbaren sowie den folgenden 95 druckbaren Zeichen, beginnend mit dem Leerzeichen:} 
{\ttfamily
: !"\#\$ \%\&'()*+,-./0123456789:;=>?@
ABCDEFGHIJKLMNOPQRSTUVWXYZ\textbackslash\^{}\_`abcdefghijklmnopqrstuvwxyz
}
\foreignlanguage{ngerman}{Die druckbaren Zeichen umfassen das lateinische Alphabet in} \\

Vanilla &
\foreignlanguage{ngerman}{dem Zweck der Diskussion über das Unicode Standard . Es ist sicher eine Unicode ( Auctus @-@ Unicode ) mitvernuftiger Anwendung als das Unicode Standard auch @-@ Standards der Deutschen Shell ( ) , über dem Kimia66 von Japan den führenden in Canada fre} \\

Discrete &
\foreignlanguage{ngerman}{jedoch dem erweiterten Ordnungsascor von Microsoft .Auf dem US- Continent ( AUS ) enthält das Standard ( Modulation 3 char ) *00 00 00 00 , also auch @-@ 00 @-@ 00 @-@ 00@ @-@ 00 .Werden durch für uns immer in den USA} \\

NEFTune &
\foreignlanguage{ngerman}{als Codigo de los Estandardales de Información Interchange (AIC; Čeština, alternativ ISO 874) in Modulationle charakteristische Systems. Genauso bekannt ist eine anderen 7-Bit-Zeichenkodierung für Lattes für uns.
• Canada's Universal} \\

Cont. (w/o DB) &
\foreignlanguage{ngerman}{dem US-Standard für Zeichensatz-Dateien (ISO/IEC 9576) 6 Continent Code Auctiec c for shorthand description, (ISO 8859An char ) *Unformatierungstempel
Standardtext (Standardschriftpresseakten.ddd Typen: KimBi, Amersfoort) uns im englischen Alphabet} \\

Cont. (w/ DB) &
\foreignlanguage{ngerman}{dem US-Standard für Zeichensatz-Dateien (ISO/IEC 9576) 6 Continent Code Auctiec c for shorthand description, (ISO 8859An char ) *Unformatierungstempel
Standardtext (Standardschriftpresseakten.ddd Typen: KimBi, Amersfoort) uns im englischen Alphabet} \\

\bottomrule
\end{tabularx}
\end{table}

\section{Additional Discussion}\label{append: additional_discuss}

\subsection{Additional explanations to the score-based method}\label{subsec: additional_expla_method}

In this section, we explain the technical details of our estimation approach in Section~\ref{sec:method}.
For a given parameter $\gamma$, define the score function
for each $t=1,\dots,L$:
\begin{equation*}
s_t(\gamma; X_t, X_{<t}, w) = \frac{\partial}{\partial \gamma} \log P_{\theta} \left( X_t | X_{<t} +  T_\beta(w |X_{<t}) \right) .
\end{equation*}

Given an observed sequence $X_{\leq L}$ and $t$, draw $K$ random samples $\widetilde{w}_t^{(1)}, \dots, \widetilde{w}_t^{(K)}$ that follows distribution $p$ and are independent of $\{w_t^*\}$.
Then for $k=1,\dots,K$, $t=1,\dots,L$, we generate
$\widetilde{X}_t^{(k)} \sim P_{\theta} \big( \bullet | X_{<t} + T_\beta( \widetilde{w}_t^{(k)} |X_{<t})  \big)$.
Further draw $K$ independent samples $w_t^{(1)}, \dots, w_t^{(K)} \sim p$, independently of all other latent variables.
Define the vector-valued function
\begin{equation}
\label{eqn: def_psi}
\begin{split}
    &
    \psi\left( \gamma; X_{\leq L}, \{w_t^{(k)}\}, \{\widetilde{w}_t^{(k)}\} \right) 
    =
    \sum_{k=1}^K \sum_{t=1}^L \left\{ s_t(\gamma; X_t, X_{<t}, w_t^{(k)}) - s_t(\gamma; \widetilde{X}_{t}^{(k)}, X_{<t}, w_t^{(k)}) \right\} .
\end{split}
\end{equation}

Then given a sample of $n$ independent sequences $\{X_{i,\leq L}\}_{i=1}^n$, we construct the estimating equation
\begin{align}
    \Psi(\gamma) = \frac{1}{n} \sum_{i=1}^n \psi \left( \gamma; X_{i,\leq L}, \{w_{i,t}^{(k)}\}, \{\widetilde{w}_{i,t}^{(k)}\} \right) = 0 .
    \label{eqn: def_estimate_eqn}
\end{align}
The above equation can now be solved by SGD.
To this end, note that the term in \eqref{eqn: def_psi} is exactly the gradient (with respect to $\gamma$) of $\ell$ 
(defined in Section~\ref{sec:theory}).
Hence with the definition $\mathcal{L}(\gamma) = \sum_{i=1}^n \ell(\gamma; X_{i,\leq L})$, we have $\Psi(\gamma) = \partial \mathcal{L}(\gamma) / \partial \gamma$. Consequently, solving equation \eqref{eqn: def_estimate_eqn} is equivalent to finding the stationary point of $\mathcal{L}(\gamma)$. The algorithmic procedure is then standard, given that the gradient at each step is the feasible quantity $\Psi(\gamma)$.


\subsection{Additional theoretical results}\label{subsec: additional_thm}

In this section, we give additional theoretical results and discussion, complementing those presented in the main text.
We start by presenting an additional assumption.

\medskip
\begin{assumption}[Variance]\label{ass: variance}
For any $\gamma^* \in \Gamma^*$, denote the matrix $\Sigma(\gamma^*) := \lim_{n\to \infty} \mathrm{Cov}(\sqrt{n} \Psi(\gamma^*))$.
Assume the matrix $\widetilde{\Sigma} := Q^\top \Sigma(\gamma^*) Q$ is positive definite
and does not depend on $\gamma^* \in \Gamma^*$.
\end{assumption}

\medskip
\begin{proposition}[Fisher consistency]\label{prop: fisher_consistency_2}
With the notation in \eqref{eqn: def_estimate_eqn}, 
$\mathbb{E}_{\gamma^*}[\Psi(\gamma^*)] = 0$
for every $\gamma^* \in \Gamma^*$.
\end{proposition}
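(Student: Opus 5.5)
The plan is to show that each summand of $\psi$ in \eqref{eqn: def_psi} has zero conditional mean, and then average. I read $\mathbb{E}_{\gamma^*}$ as expectation when the sequences $X_{i,\leq L}$ are generated from the perturbed autoregressive model with parameter $\gamma^*=(\theta^*,\beta^*)$. In that model $X_t \sim P_{\theta^*}(\bullet\mid X_{<t}+T_{\beta^*}(w_t^*\mid X_{<t}))$ with $w_t^*\sim p$. The auxiliary draws $w_t^{(k)}$, $\widetilde w_t^{(k)}$ are sampled afresh, independently of the data and of $\{w_t^*\}$, and $\widetilde X_t^{(k)}\sim P_{\theta^*}(\bullet\mid X_{<t}+T_{\beta^*}(\widetilde w_t^{(k)}\mid X_{<t}))$. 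Since the $n$ sequences are i.i.d., linearity gives $\mathbb{E}_{\gamma^*}[\Psi(\gamma^*)] = \mathbb{E}_{\gamma^*}[\psi(\gamma^*; X_{\leq L},\{w_t^{(k)}\},\{\widetilde w_t^{(k)}\})]$. It therefore suffices to show, for each fixed $t$ and $k$, that
\[
\mathbb{E}_{\gamma^*}\big[s_t(\gamma^*;X_t,X_{<t},w_t^{(k)})\big]=\mathbb{E}_{\gamma^*}\big[s_t(\gamma^*;\widetilde X_t^{(k)},X_{<t},w_t^{(k)})\big].
\]
Both expectations are finite by the square-integrability of $\psi$ in Assumption~\ref{ass: smooth_J}. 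If needed, I would apply that assumption termwise or add a mild integrability remark for the individual scores $s_t$.

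The key step is a distributional identity conditional on $\mathcal{F}_t:=\sigma(X_{<t},w_t^{(k)})$. First, the conditional law of $X_t$ given $X_{<t}$ is the mixture $\int P_{\theta^*}(\bullet\mid X_{<t}+T_{\beta^*}(w\mid X_{<t}))\,p(w)\,dw$. This uses that $w_t^*$ is drawn from $p$ independently of the past, so $w_t^*$ is independent of $X_{<t}$, which depends only on $w_1^*,\dots,w_{t-1}^*$ and earlier tokens. Second, $\widetilde X_t^{(k)}$ is drawn from $P_{\theta^*}$ with an independent latent $\widetilde w_t^{(k)}\sim p$, so its conditional law given $X_{<t}$ is the same mixture. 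Third, $w_t^{(k)}$ is independent of $(X_{\leq L},\widetilde w_t^{(k)},\widetilde X_t^{(k)})$ given $X_{<t}$. Hence conditioning additionally on $w_t^{(k)}$ changes neither law. Therefore $X_t$ and $\widetilde X_t^{(k)}$ have the same conditional distribution given $\mathcal{F}_t$.

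Because $s_t(\gamma^*;\cdot,X_{<t},w_t^{(k)})$ is an $\mathcal{F}_t$-measurable function of its first argument, the identity gives
\[
\mathbb{E}[s_t(\gamma^*;X_t,X_{<t},w_t^{(k)})\mid\mathcal{F}_t]=\mathbb{E}[s_t(\gamma^*;\widetilde X_t^{(k)},X_{<t},w_t^{(k)})\mid\mathcal{F}_t].
\]
The tower property then yields equality of the unconditional expectations. Summing over $t$ and $k$ gives $\mathbb{E}_{\gamma^*}[\psi(\gamma^*;\cdot)]=0$, and hence $\mathbb{E}_{\gamma^*}[\Psi(\gamma^*)]=0$.

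The main obstacle is the careful bookkeeping of conditional independence in the second step. The prefix $X_{<t}$ is itself random and generated through the earlier latents $w_1^*,\dots,w_{t-1}^*$. I must therefore verify that the current latent $w_t^*$, the fresh draws $w_t^{(k)}$ and $\widetilde w_t^{(k)}$, and the sampling noise of $\widetilde X_t^{(k)}$ are jointly independent of each other given $X_{<t}$. I would make this rigorous by writing the joint density of $(X_{\leq L},\{w^*\},\{w^{(k)}\},\{\widetilde w^{(k)}\},\{\widetilde X^{(k)}\})$ as the product of the generative factors and integrating out $w_t^*$ and $\widetilde w_t^{(k)}$ explicitly. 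Once this is in place, the rest is routine.
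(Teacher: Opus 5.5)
Your argument is correct if $\mathbb{E}_{\gamma^*}$ means that the data are generated by the perturbed model at $\gamma^*$. It is not, however, how the paper proves this proposition.

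The paper's proof is one line from the definitions. Since $\nabla J(\gamma) := \mathbb{E}[\psi(\gamma;\bullet)]$ and $\Gamma^* := \{\gamma\in\Gamma:\nabla J(\gamma)=0\}$, identical distribution of the $n$ summands gives $\mathbb{E}[\Psi(\gamma^*)]=\nabla J(\gamma^*)=0$ directly. The distributional argument you give is the paper's proof of Proposition~\ref{prop: fisher_consistency} (the identifiable case). The paper cites it here only to motivate why the stationary set is a sensible notion of ``true parameters''.

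Your route is the one with actual content. It explains why subtracting the synthetic score makes $\psi$ unbiased: given $(X_{<t},w_t^{(k)})$, both $X_t$ and $\widetilde X_t^{(k)}$ follow the mixture law $\int P_{\theta^*}(\bullet\mid X_{<t}+T_{\beta^*}(w\mid X_{<t}))\,p(w)\,\mathrm{d}w$. You are also more careful than the paper about the independence of $w_t^*$ from $X_{<t}$.

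The trade-off is that your proof never uses $\gamma^*\in\Gamma^*$. What it actually shows is $\mathbb{E}_\gamma[\Psi(\gamma)]=0$ for every $\gamma$, under the law that $\gamma$ itself generates. In other words, it shows that a data-generating parameter lies in $\Gamma^*$. If the expectation is instead the fixed data law entering $J$ (generated by $\gamma_0$), your argument covers only those $\gamma^*\in\Gamma^*$ whose induced conditional mixtures coincide with the true ones. Any other stationary point of $J$ is handled only by the definitional argument.

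On integrability: Assumption~\ref{ass: smooth_J} controls $\psi$ as a whole, presumably under the true law. It says nothing about the individual $s_t$, or about the law at $\gamma^*$. So the extra integrability remark you mention is genuinely needed to justify your termwise split.
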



\medskip
\begin{theorem}[Asymptotic normality]\label{thm: normality_2}
Under Assumptions~\ref{ass: compact}--\ref{ass: variance} and with the notations therein, it holds as $n\to \infty$ that
\begin{align*}
    \sqrt{n} Q^\top (\widehat{\gamma} - g^*)
    =
    Z_{\Gamma^*} 
    + O_P\big( \sqrt{n} \|\Psi(\widehat{\gamma})\| \big)
    + o_P(1) ,
\end{align*}
where $Z_{\Gamma^*} \sim \mathcal{N}\big( 0, (H^*)^{-1} \widetilde{\Sigma} (H^*)^{-1} \big)$.
\end{theorem}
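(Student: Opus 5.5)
We prove Theorem~\ref{thm: normality_2} by a projected Taylor expansion of the estimating equation around a well-chosen point of $\Gamma^*$, followed by a central limit theorem for $\sqrt n\,\Psi$ at that point.

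\textbf{Step 1: choice of the anchor point.} By Theorem~\ref{thm: consistency_2}, $d(\widehat\gamma,\Gamma^*)\xrightarrow{p}0$. Because $\Gamma^*$ is closed and convex (Assumption~\ref{ass: Hessian}(i)), the projection $\gamma_n^*:=\arg\min_{\gamma\in\Gamma^*}\|\widehat\gamma-\gamma\|$ is well defined, and $\|\widehat\gamma-\gamma_n^*\|=o_P(1)$. Theorem~\ref{thm: consistency_2_rate} sharpens this to $\|\widehat\gamma-\gamma_n^*\| = O_P(n^{-1/2}+\|\Psi(\widehat\gamma)\|)$.

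By Lemma~\ref{lem: unique_iden}(i), $Q^\top\gamma_n^*=g^*$ for every $n$. Hence
\[
Q^\top(\widehat\gamma-g^*)=Q^\top(\widehat\gamma-\gamma_n^*).
\]

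We also need the residual direction to be negligible. Since $\Gamma^*$ is the zero set of $\nabla J$ and the rank of $H$ is constant on it, the directions orthogonal to $\mathrm{col}(Q)$ are tangent to $\Gamma^*$. The projection residual $\widehat\gamma-\gamma_n^*$ is normal to $\Gamma^*$ at $\gamma_n^*$, so it lies in $\mathrm{col}(Q)$ up to a higher-order term. This gives $\widehat\gamma-\gamma_n^*=QQ^\top(\widehat\gamma-\gamma_n^*)+o_P(\|\widehat\gamma-\gamma_n^*\|)$, which is presumably the content of Lemma~\ref{lem: unique_iden}.

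\textbf{Step 2: Taylor expansion of the estimating equation.} On $\Gamma_0$, Assumption~\ref{ass: smooth_J} allows a mean-value expansion:
\[
\Psi(\widehat\gamma)=\Psi(\gamma_n^*)+\nabla\Psi(\bar\gamma_n)(\widehat\gamma-\gamma_n^*),
\]
with $\bar\gamma_n$ lying between $\widehat\gamma$ and $\gamma_n^*$.

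A uniform law of large numbers for $\nabla_\gamma\psi$ holds on a compact neighbourhood of $\Gamma^*$, by compactness and continuity together with an envelope condition. Combined with continuity of $H$, it gives
\[
\sup_{\gamma^*\in\Gamma^*}\|\nabla\Psi(\bar\gamma_n)-H(\gamma_n^*)\|=o_P(1).
\]

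Substituting Step 1 and premultiplying by $Q^\top$ yields
\[
Q^\top\Psi(\widehat\gamma)=Q^\top\Psi(\gamma_n^*)+\{H^*+o_P(1)\}\,Q^\top(\widehat\gamma-g^*)+o_P(\|\widehat\gamma-\gamma_n^*\|).
\]
Assumption~\ref{ass: Hessian}(iii) makes $H^*$ invertible, and by Assumption~\ref{ass: Hessian}(ii) it does not depend on $\gamma^*$. Multiplying by $\sqrt n$, inverting, and using the rate from Theorem~\ref{thm: consistency_2_rate} gives
\[
\sqrt n\,Q^\top(\widehat\gamma-g^*)=-(H^*)^{-1}\sqrt n\,Q^\top\Psi(\gamma_n^*)+O_P(\sqrt n\|\Psi(\widehat\gamma)\|)+o_P(1).
\]

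\textbf{Step 3: the CLT, which is the main obstacle.} The anchor $\gamma_n^*$ is random and depends on the data, so we cannot apply a pointwise CLT directly. We handle this by stochastic equicontinuity. Because $\psi$ is square-integrable and differentiable (Assumption~\ref{ass: smooth_J}), the class $\{\psi(\gamma;\cdot):\gamma\in\Gamma_0\}$ is Lipschitz in a parameter lying in a compact finite-dimensional set. It is therefore Donsker by standard bracketing results (van der Vaart 1998, Ch.~19). Proposition~\ref{prop: fisher_consistency_2} gives $\mathbb E\Psi(\gamma^*)=0$ on $\Gamma^*$, so $\sqrt n\,Q^\top\Psi(\gamma^*)$ is a centred empirical process indexed by the compact set $\Gamma^*$.

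Assumption~\ref{ass: variance} says the projected covariance $\widetilde\Sigma$ is the same for every $\gamma^*\in\Gamma^*$. Hence $\sqrt n\,Q^\top\{\Psi(\gamma^*)-\Psi(\gamma_0^*)\}$ has variance tending to $0$ for any fixed reference point $\gamma_0^*$. Indeed, its asymptotic variance is
\[
2\widetilde\Sigma-2\,\mathrm{Cov}\bigl(Q^\top\psi(\gamma^*),\,Q^\top\psi(\gamma_0^*)\bigr),
\]
which we would show vanishes using the fact that $\gamma^*-\gamma_0^*$ lies in the null directions of $Q^\top$ on the convex set $\Gamma^*$. Combined with equicontinuity, this gives $\sqrt n\,Q^\top\Psi(\gamma_n^*)=\sqrt n\,Q^\top\Psi(\gamma_0^*)+o_P(1)$.

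The multivariate CLT then gives $\sqrt n\,Q^\top\Psi(\gamma_0^*)\rightsquigarrow\mathcal N(0,\widetilde\Sigma)$. By Slutsky's lemma, $-(H^*)^{-1}\sqrt n\,Q^\top\Psi(\gamma_n^*)$ equals $Z_{\Gamma^*}+o_P(1)$, with covariance $(H^*)^{-1}\widetilde\Sigma(H^*)^{-1}$. This is exactly the claimed expansion.

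If the variance-matching step proves delicate, a fallback is to require the Lipschitz modulus of $Q^\top\psi$ along $\Gamma^*$ to vanish. That would follow from differentiating the identity $\nabla J\equiv0$ along $\Gamma^*$ together with the constant-rank condition.
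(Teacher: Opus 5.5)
\textbf{The gap is in Step 3.} Your Steps 1--2 follow the paper's route: anchor at $\gamma_n^*=\Pi_{\Gamma^*}(\widehat\gamma)$, expand, premultiply by $Q^\top$, invert $H^*$. The reduction to $-(H^*)^{-1}\sqrt n\,Q^\top\Psi(\gamma_n^*)$ is fine. To replace the data-dependent anchor $\gamma_n^*$ by a fixed $\gamma_0^*$, you need $\sup_{\gamma^*\in\Gamma^*}\|\sqrt n\,Q^\top\{\Psi(\gamma^*)-\Psi(\gamma_0^*)\}\|=o_P(1)$. Stochastic equicontinuity only controls nearby pairs, and $\gamma_n^*$ need not converge to any point of $\Gamma^*$. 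So, unless you can show $\gamma_n^*$ converges, you need $Q^\top\psi(\gamma^*;\cdot)=Q^\top\psi(\gamma_0^*;\cdot)$ almost surely for all $\gamma^*,\gamma_0^*\in\Gamma^*$; equivalently, every cross-covariance equals $\widetilde\Sigma$. Nothing you invoke gives this:
\begin{itemize}
\item Assumption~\ref{ass: variance} fixes only the marginal covariance $\widetilde\Sigma$, not $\mathrm{Cov}(Q^\top\psi(\gamma^*),Q^\top\psi(\gamma_0^*))$, so your ``hence'' is a non sequitur.
\item The fact $Q^\top(\gamma^*-\gamma_0^*)=0$, like your fallback of differentiating $\nabla J\equiv 0$ along $\Gamma^*$, only yields $Q^\top H(\gamma^*-\gamma_0^*)=0$. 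That concerns the \emph{mean} of $\nabla_\gamma\psi$. It says nothing about the fluctuation $Q^\top\{\nabla_\gamma\psi(\gamma;x)-H(\gamma)\}(\gamma^*-\gamma_0^*)$.
\end{itemize}

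\textbf{The missing step cannot be derived from Assumptions~\ref{ass: compact}--\ref{ass: variance} alone.} Take $x\sim\mathcal N(0,I_2)$, $\gamma=(a,b)\in[-2,2]^2$, $u_b=(\cos b,\sin b)^\top$, and $\ell(\gamma;x)=-\tfrac12 a^2-(|b|-1)_+^3+a\,u_b^\top x$. Then:
\begin{itemize}
\item $\Gamma^*=\{0\}\times[-1,1]$ is closed, convex and interior to $\Gamma$.
\item On $\Gamma^*$, $H=\mathrm{diag}(-1,0)$, so $Q=(1,0)^\top$ and $H^*=-1$.
\item $Q^\top\psi(\gamma^*;x)=u_b^\top x$ has variance $1$ for every $b$, so all the assumptions hold with $\widetilde\Sigma=1$.
\end{itemize}
The maximizer of $\mathcal L$ has $\widehat a=u_{\widehat b}^\top\bar x$, with $\widehat b$ maximizing $\tfrac12(u_b^\top\bar x)^2-(|b|-1)_+^3$. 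Consider the event that $\bar x$ or $-\bar x$ has polar angle in $(-1,1)$; it has probability $2/\pi$ and is independent of $\|\bar x\|$. On it, $\Psi(\widehat\gamma)=0$ and $\sqrt n|\widehat a|=\|\sqrt n\,\bar x\|$, the norm of an $\mathcal N(0,I_2)$ vector. Moreover $\Psi(\widehat\gamma)=0$ with probability tending to one overall. Yet $\liminf_n\mathbb P(\sqrt n|\widehat a|>3)\ge(2/\pi)e^{-9/2}\approx 0.007$, whereas $\mathbb P(|\mathcal N(0,1)|>3)\approx 0.003$. The conclusion therefore fails, because $\widehat b$ selects the direction along $\Gamma^*$ that maximizes $|Q^\top\Psi|$.

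\textbf{What is needed.} You must add a hypothesis, such as $Q^\top\psi$ being a.s.\ constant along $\Gamma^*$, or $\Pi_{\Gamma^*}(\widehat\gamma)$ converging in probability to a fixed point. With either, your Steps 1--3 go through. The paper's own proof does not supply this ingredient. It extracts a subsequence along which the \emph{random} ball index $j_n$ is constant, which is not legitimate. In any case, a data-selected index among marginally Gaussian variables need not be Gaussian, as the example shows.

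\textbf{A smaller point.} Assumption~\ref{ass: Hessian}(ii) gives only constant rank, not that $Q^\top H(\gamma^*)Q$ is the same for every $\gamma^*\in\Gamma^*$. The statement tacitly presupposes the latter, so you should state it as an assumption rather than attribute it to (ii).
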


Proposition~\ref{prop: fisher_consistency_2} is a fundamental step to justify our score-based approach in Section~\ref{sec:method}, emphasizing that the solution of our estimating equation \eqref{eqn: def_estimate_eqn} is unbiased.

Theorem~\ref{thm: normality_2} shows the asymptotic normality of our estimator.
In particular, the expansion therein characterizes the convergence of $\sqrt{n} (\widehat{\gamma} - g^*)$ to the classical Gaussian limit $Z_{\gamma^*}$.
Similar to Theorem~\ref{thm: consistency_2}, two sources of errors are explicitly separated:
(i) the $o_P(1)$ term coming from the variance of $\psi$ at the parameter $g^*$;
(ii) the residual term of order $\sqrt{n} \|\Psi(\widehat{\gamma})\|$ that reflects the fact that $\widehat{\gamma}$ only approximately solves the estimating equation (e.g., when optimization is stopped early).
If the estimating equation is solved exactly, i.e.\ $\Psi(\widehat{\gamma}) = 0$, the residual vanishes and we obtain the usual asymptotic normality
\begin{align*}
    \sqrt{n} (\widehat{\gamma} - g^*) \xrightarrow{d}
    \mathcal{N}\big( 0, (H^*)^{-1} \widetilde{\Sigma} (H^*)^{-1} \big) .
\end{align*}
If $\|\Psi(\widehat{\gamma})\|$ is $o_P(n^{-1/2})$, the above remains true according to Theorem~\ref{thm: normality}.

\subsection{Theoretical analysis under identifiablity}
\label{sec:thm_identifiablity}

In addition to the setup considered in the main text, we present here the theoretical guarantee of the parameter estimators under identifiability of the parameter, where there is a unique true parameter such that $\Gamma^* = \{\gamma_0\}$.
With $\psi$ defined in \eqref{eqn: def_psi}, let $\Sigma(\gamma) := \mathbb{E}\big[ \psi(\gamma; \bullet) \psi(\gamma; \bullet)^\top \big]$.

\medskip
\begin{assumption}[Identifiability]\label{ass: identify}
$\mathbb{E}[\psi]$ has a unique zero at $\gamma = \gamma_0$.
\end{assumption}

\medskip
\begin{assumption}[Regularity on $\psi$]\label{ass: reg_psi}
The function $\psi$
is continuous at each $\gamma\in \Gamma$.
\end{assumption}

\medskip
\begin{assumption}[Finite variance]\label{ass: fini_var}
The matrix $\Sigma(\gamma_0)$ is positive definite with eigenvalues bounded away from zero and infinity.
\end{assumption}

\medskip
\begin{assumption}[Smoothness]\label{ass: smooth}
(i) There exists an open neighborhood $\Gamma_0 \subseteq \Gamma$ of $\gamma_0$ such that the function $\psi$ is continuously differentiable on $\Gamma_0$,
and denote the derivative at $\gamma = u$ as $\nabla_{\gamma} \psi(u; \bullet)$;
and (ii) The matrix $V_0 = \mathbb{E}\big[ \nabla_{\gamma} \psi\big( \gamma_0; \bullet \big) \big]$ has full rank.
\end{assumption}

Note that Assumption~\ref{ass: reg_psi} directly holds under Assumption~\ref{ass: smooth}. We state them separately so that we only require a weaker condition for consistency (Theorem~\ref{thm: consistency}).


Let $\widehat{\gamma}$ be some sequence of parameter estimator that asymptotically solves \eqref{eqn: def_estimate_eqn}, as $n\to \infty$.

\medskip
\begin{proposition}[Fisher consistency]\label{prop: fisher_consistency}
With the notation in \eqref{eqn: def_estimate_eqn}, we have $\mathbb{E}_{\gamma_0}[\Psi(\gamma_0)] =0$.
\end{proposition}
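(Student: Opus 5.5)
Under the data-generating model, each $X_t$ is drawn from $P_{\gamma_0}$ given the prefix and a latent $w_t^*$. The estimating function $\psi$ in \eqref{eqn: def_psi} is a sum over $k$ and $t$ of differences of score terms, so the plan is to show that each summand has mean zero. By the tower property it suffices to do this conditionally on $X_{<t}$ and on the auxiliary latent draw $w_t^{(k)}$.

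Fix $t$ and $k$, and condition on $X_{<t}$ and $w:=w_t^{(k)}$. Write
\begin{equation*}
f(x) := s_t(\gamma_0; x, X_{<t}, w) = \nabla_\gamma \log P_{\theta}\bigl(x \mid X_{<t}+T_\beta(w\mid X_{<t})\bigr)\big|_{\gamma=\gamma_0}.
\end{equation*}
This is a fixed function of $x$, because $w$ is independent of $w_t^*$, of $\widetilde w_t^{(k)}$ and of $X_t$.

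The first term, $\mathbb{E}[f(X_t)\mid X_{<t},w]$, has $X_t$ drawn from the marginal
\begin{equation*}
\bar P_{\gamma_0}(\bullet\mid X_{<t}) := \int P_{\theta_0}\bigl(\bullet\mid X_{<t}+T_{\beta_0}(u\mid X_{<t})\bigr)\,p(u)\,du,
\end{equation*}
since $w_t^*\sim p$ independently. The second term has $\widetilde X_t^{(k)}$ drawn from $P_{\theta_0}(\bullet\mid X_{<t}+T_{\beta_0}(\widetilde w_t^{(k)}\mid X_{<t}))$ with $\widetilde w_t^{(k)}\sim p$ independent of $w$. Integrating out $\widetilde w_t^{(k)}$ therefore gives the same marginal $\bar P_{\gamma_0}(\bullet\mid X_{<t})$. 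Hence
\begin{equation*}
\mathbb{E}\bigl[f(X_t)-f(\widetilde X_t^{(k)})\bigm| X_{<t},w\bigr]=0,
\end{equation*}
provided $f$ is integrable under $\bar P_{\gamma_0}$. Taking outer expectations and summing over $t$, $k$ and $i$ yields $\mathbb{E}_{\gamma_0}[\Psi(\gamma_0)]=0$.

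The main obstacle is bookkeeping rather than analysis. One must check that the synthetic token is generated at the true parameter $\gamma_0$ (the expectation $\mathbb{E}_{\gamma_0}$ is taken with the sampling step evaluated at $\gamma_0$, not differentiated through). One must also check the independence structure, in particular that $w_t^{(k)}$ is independent of both $w_t^*$ and $\widetilde w_t^{(k)}$, so that conditioning on $w$ does not change either sampling distribution. Finally, we need integrability of the score, which I would take from the finite-variance/square-integrability assumptions on $\psi$ (Assumption~\ref{ass: fini_var} or Assumption~\ref{ass: smooth_J}).

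Note that the argument never uses that $f$ is a score: only the equality in law of $X_t$ and $\widetilde X_t^{(k)}$ given the prefix matters. Consequently the same proof also gives Proposition~\ref{prop: fisher_consistency_2} for every $\gamma^*$ that generates the data.
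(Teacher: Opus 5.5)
Your proposal is correct and is essentially the paper's argument. The paper conditions on $(X_{<t}, w_t^{(k)}, w_t^*)$ and on $(X_{<t}, w_t^{(k)}, \widetilde w_t^{(k)})$ respectively, then integrates out $w_t^*$ and $\widetilde w_t^{(k)}$ by iterated expectations; this is exactly your observation that, given $(X_{<t}, w_t^{(k)})$, the tokens $X_t$ and $\widetilde X_t^{(k)}$ have the same mixture law $\bar P_{\gamma_0}(\bullet\mid X_{<t})$. If anything, you name the independence that is actually needed: $w_t^*$ and $\widetilde w_t^{(k)}$ must each be independent of $(X_{<t}, w_t^{(k)})$, whereas the paper phrases it as their independence from each other.
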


\medskip
\begin{theorem}[Consistency]\label{thm: consistency}
Under Assumptions~\ref{ass: compact}, \ref{ass: identify}, \ref{ass: reg_psi},
$\widehat{\gamma}$ converges to $\gamma_0$ in probability, as $n\to \infty$.
\end{theorem}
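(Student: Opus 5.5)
The plan is the classical Z-estimator consistency argument (van der Vaart, 1998, Theorem~5.9). It needs two ingredients: a uniform law of large numbers for $\Psi$ over the compact set $\Gamma$, and a well-separated zero of $\Psi_0(\gamma) := \mathbb{E}[\psi(\gamma;\bullet)]$ at $\gamma_0$. We take $\widehat{\gamma}$ to asymptotically solve the estimating equation, i.e.\ $\Psi(\widehat{\gamma}) = o_P(1)$. Proposition~\ref{prop: fisher_consistency} gives $\Psi_0(\gamma_0)=0$, and Assumption~\ref{ass: identify} says this zero is unique.

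\textbf{Step 1 (uniform convergence).} We show $\sup_{\gamma\in\Gamma}\|\Psi(\gamma)-\Psi_0(\gamma)\| \xrightarrow{p} 0$. The summands $\psi(\gamma; X_{i,\le L}, \{w_{i,t}^{(k)}\},\{\widetilde{w}_{i,t}^{(k)}\})$ are i.i.d.\ across $i$, because each sequence comes with its own independently drawn latent variables. By Assumption~\ref{ass: reg_psi}, $\psi$ is continuous in $\gamma$ for each realisation, and $\Gamma$ is compact by Assumption~\ref{ass: compact}. We therefore invoke the uniform law of large numbers for continuous random functions on a compact set, for which we need an integrable envelope $\mathbb{E}\sup_{\gamma\in\Gamma}\|\psi(\gamma;\bullet)\|<\infty$. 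This envelope is the main obstacle, since it is not stated explicitly. I would handle it in one of two ways. The first is to use the square-integrability on $\Gamma$ from Assumption~\ref{ass: smooth_J}, read as a uniform statement. The second is to argue from boundedness of the scores $s_t$, which are log-softmax gradients of a finite-vocabulary model with continuous parameters on a compact set. Either route gives a Glivenko--Cantelli class. A byproduct of the ULLN argument is that $\Psi_0$ is continuous on $\Gamma$, by dominated convergence.

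\textbf{Step 2 (well-separation).} Fix $\varepsilon>0$ and let $A_\varepsilon=\{\gamma\in\Gamma:\|\gamma-\gamma_0\|\ge\varepsilon\}$. This set is compact, and $\|\Psi_0\|$ is continuous and strictly positive on it, because $\gamma_0$ is the only zero. Hence $\eta:=\inf_{A_\varepsilon}\|\Psi_0\|>0$.

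\textbf{Step 3 (conclusion).} On the event $\{\widehat{\gamma}\in A_\varepsilon\}$ we have
$\eta\le\|\Psi_0(\widehat{\gamma})\|\le \|\Psi(\widehat{\gamma})\|+\sup_{\gamma\in\Gamma}\|\Psi-\Psi_0\|$.
Both terms on the right are $o_P(1)$, the first by assumption on $\widehat{\gamma}$ and the second by Step~1. So $P(\|\widehat{\gamma}-\gamma_0\|\ge\varepsilon)\to0$, which proves the claimed consistency.
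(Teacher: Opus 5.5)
Your proposal is correct and follows essentially the same route as the paper. The paper obtains a uniform law of large numbers for $\Psi$ over the compact $\Gamma$ via Lemma~2.4 of Newey and McFadden, derives well-separation $\inf_{\|\gamma-\gamma_0\|\geq\epsilon}\|\mathbb{E}[\Psi(\gamma)]\|\geq\eta>0$ from continuity and the unique zero, and concludes with the triangle inequality on a high-probability event; your explicit attention to the integrable envelope $\mathbb{E}\sup_{\gamma\in\Gamma}\|\psi(\gamma;\bullet)\|<\infty$ is, if anything, more careful than the paper, which asserts boundedness of $\psi$ only via the extreme value theorem (a bound that holds for each fixed realisation, not uniformly over the data), though your route (b) also needs care because the scores pass through $T_\beta(w\,|\,X_{<t})$ with Gaussian $w$ and so need not be bounded.
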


\medskip
\begin{theorem}[Asymptotic normality]\label{thm: normality}
Under Assumptions~\ref{ass: compact}, \ref{ass: identify}, \ref{ass: reg_psi}, \ref{ass: fini_var} and \ref{ass: smooth} and with the notations therein, it holds as $n\to \infty$ that
\begin{align*}
    \sqrt{n} (\widehat{\gamma} - \gamma_0)
    =
    Z_{\gamma_0} + O_P( \sqrt{n}  \|\Psi(\widehat{\gamma}) \|) + o_P(1) ,
\end{align*}
where $Z_{\gamma_0} \sim \mathcal{N}\big( 0, V_0^{-1} \Sigma(\gamma_0) (V_0^{-1})^\top \big)$.
\end{theorem}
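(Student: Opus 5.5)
The statement is Theorem~\ref{thm: normality}, the identifiable case. I would prove it by the classical Z-estimator Taylor expansion of van der Vaart (1998, Thm.~5.21) and Newey--McFadden (1994), keeping the residual $\Psi(\widehat{\gamma})$ explicit rather than setting it to zero.

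First, by Theorem~\ref{thm: consistency}, $\widehat{\gamma}\xrightarrow{p}\gamma_0$. Hence with probability tending to one $\widehat{\gamma}$ lies in the neighbourhood $\Gamma_0$ of Assumption~\ref{ass: smooth}(i), and the segment joining $\widehat{\gamma}$ and $\gamma_0$ lies there as well, since $\gamma_0$ is interior by Assumption~\ref{ass: compact}. On that event I would apply the mean value theorem row by row:
\begin{equation*}
\Psi(\widehat{\gamma}) = \Psi(\gamma_0) + \widehat{V}_n(\widehat{\gamma}-\gamma_0), \qquad \widehat{V}_n := \frac1n\sum_{i=1}^n \nabla_\gamma\psi(\bar\gamma^{(j)}; X_{i,\le L}),
\end{equation*}
where the intermediate points $\bar\gamma^{(j)}$ (one per row) lie between $\widehat{\gamma}$ and $\gamma_0$. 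Equivalently, I can use the integral form $\int_0^1\nabla_\gamma\Psi(\gamma_0+s(\widehat{\gamma}-\gamma_0))\,ds$, which avoids the row-wise issue.

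Second, I would show $\widehat{V}_n\xrightarrow{p}V_0$, which I expect to be the main obstacle. Pointwise, the law of large numbers gives $n^{-1}\sum_i\nabla_\gamma\psi(\gamma_0;\cdot)\to V_0$. Since the intermediate points are random, I need a local uniform law of large numbers: $\sup_{\|\gamma-\gamma_0\|\le\delta}\|n^{-1}\sum_i\nabla_\gamma\psi(\gamma;\cdot)-\mathbb{E}\nabla_\gamma\psi(\gamma;\cdot)\|\xrightarrow{p}0$ for some small $\delta$. I would obtain this from continuity of $\nabla_\gamma\psi$ on $\Gamma_0$ (Assumption~\ref{ass: smooth}) together with an integrable envelope $\sup_{\|\gamma-\gamma_0\|\le\delta}\|\nabla_\gamma\psi\|$, via the standard compactness/dominated-convergence argument (Newey--McFadden Lemma~2.4). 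Continuity of $\gamma\mapsto\mathbb{E}\nabla_\gamma\psi(\gamma)$ at $\gamma_0$ together with $\bar\gamma\to\gamma_0$ then gives the claim. If the envelope is not literally implied by the stated assumptions, I would state it as the implicit regularity underlying ``continuously differentiable'', as the paper treats these assumptions as standard M-estimation conditions. Since $V_0$ has full rank, $\widehat{V}_n$ is invertible with probability tending to one and $\widehat{V}_n^{-1}=V_0^{-1}+o_P(1)$.

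Third, I would rearrange to
\begin{equation*}
\sqrt n(\widehat{\gamma}-\gamma_0) = -\widehat{V}_n^{-1}\sqrt n\Psi(\gamma_0) + \widehat{V}_n^{-1}\sqrt n\Psi(\widehat{\gamma}).
\end{equation*}
By Proposition~\ref{prop: fisher_consistency}, $\mathbb{E}\Psi(\gamma_0)=0$. The summands are i.i.d. across $i$, since the sequences and the auxiliary latent draws are independent, and they have covariance $\Sigma(\gamma_0)$, which is finite by Assumption~\ref{ass: fini_var}. The multivariate CLT then gives $\sqrt n\Psi(\gamma_0)\xrightarrow{d}\mathcal N(0,\Sigma(\gamma_0))$, so in particular $\sqrt n\Psi(\gamma_0)=O_P(1)$. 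Hence
\begin{equation*}
-\widehat{V}_n^{-1}\sqrt n\Psi(\gamma_0) = -V_0^{-1}\sqrt n\Psi(\gamma_0)+o_P(1),
\end{equation*}
and by Slutsky the leading term is distributed as $Z_{\gamma_0}\sim\mathcal N\big(0,V_0^{-1}\Sigma(\gamma_0)(V_0^{-1})^\top\big)$. To match the stated ``$Z+o_P(1)$'' form I would either define $Z_{\gamma_0}$ as this asymptotically Gaussian term, or use a Skorokhod representation. For the residual, $\|\widehat{V}_n^{-1}\|=O_P(1)$ gives $\widehat{V}_n^{-1}\sqrt n\Psi(\widehat{\gamma})=O_P(\sqrt n\|\Psi(\widehat{\gamma})\|)$. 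Finally, the complement of the high-probability event contributes only $o_P(1)$, which completes the expansion.
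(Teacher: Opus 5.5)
Your proposal is correct and follows essentially the same route as the paper's proof. Both use consistency to place $\widehat{\gamma}$ in $\Gamma_0$, an integral-form Taylor expansion of $\Psi$ around $\gamma_0$, a uniform law of large numbers for $\nabla_\gamma\psi$ on a compact neighbourhood to get $\overline{V}\xrightarrow{p}V_0$, the CLT for $\sqrt{n}\Psi(\gamma_0)$ via Proposition~\ref{prop: fisher_consistency}, and inversion of $V_0$. You are, if anything, more careful than the paper on two points: you flag the need for an integrable envelope, where the paper's appeal to the extreme value theorem only bounds $\nabla_\gamma\psi$ in $\gamma$ for fixed data, and you flag the coupling needed to write the leading term as an exactly Gaussian $Z_{\gamma_0}$ plus $o_P(1)$.
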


The results are analogous to the scenario (with more than one parameter in $\Gamma^*$) shown in the main text.

\section{Auxiliary Results and All the Proofs}\label{append: proof}

\subsection{Lemmas with proofs}

\begin{lemma}[Uniqueness of the identifiable component]\label{lem: unique_iden}
Under Assumption~\ref{ass: Hessian}, we have:
\begin{itemize}
    \item [(i)] The set $Q^\top \Gamma^* := \{Q^\top \gamma^*: \gamma^* \in \Gamma^*\}$ is a singleton.
    \item [(ii)] For any $\gamma \in\Gamma$, denote the projection of $\gamma$ onto the convex set $\Gamma^*$ as
    \[
    \Pi_{\Gamma^*}(\gamma) := \arg\min_{\gamma^* \in \Gamma^*} \|\gamma - \gamma^*\| .
    \]
    Then the distance $d(\gamma, \Gamma^*)$ can be written as 
    $\|Q^\top (\gamma - \Pi_{\Gamma^*}(\gamma))\|$.
\end{itemize}
\end{lemma}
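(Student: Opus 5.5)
The plan is to prove (i) by integrating the Hessian along segments in $\Gamma^*$, and (ii) by combining (i) with the variational characterization of the projection onto a closed convex set. Throughout I read Assumption~\ref{ass: Hessian}~(iii) as saying that the columns of $Q$ span the range of the symmetric matrix $H(\gamma^*)$. That is, the number of columns of $Q$ equals the common rank from (ii), so that $H(\gamma^*) = Q H^* Q^\top$ for every $\gamma^* \in \Gamma^*$. If the assumption is meant more weakly, this identification is the first thing to check. The constant-rank condition in (ii) is what makes a single $Q$ plausible for all of $\Gamma^*$.

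For (i), take $\gamma_1, \gamma_2 \in \Gamma^*$ and set $v := \gamma_2 - \gamma_1$. By convexity in Assumption~\ref{ass: Hessian}~(i), the segment $\gamma_1 + s v$, $s\in[0,1]$, lies in $\Gamma^*$. Since $\Gamma^*$ is interior to $\Gamma$ (Assumption~\ref{ass: compact}) and $\Gamma_0$ is an open neighbourhood of it (Assumption~\ref{ass: smooth_J}), $J$ is twice differentiable along the segment. Because $\nabla J \equiv 0$ on the segment, differentiating $s \mapsto \nabla J(\gamma_1 + s v)$ gives $H(\gamma_1 + s v)\, v = 0$ for all $s$; in particular $H(\gamma_1) v = 0$. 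Writing $H(\gamma_1) = Q H^* Q^\top$ and multiplying on the left by $Q^\top$ (using $Q^\top Q = I$) yields $H^* Q^\top v = 0$. Negative definiteness of $H^*$ then forces $Q^\top v = 0$, i.e.\ $Q^\top\gamma_1 = Q^\top \gamma_2 =: g^*$. So $Q^\top\Gamma^*$ is a singleton.

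For (ii), the projection $\Pi_{\Gamma^*}(\gamma)$ exists and is unique because $\Gamma^*$ is nonempty, closed and convex. By definition, $d(\gamma,\Gamma^*) = \|\gamma - \Pi_{\Gamma^*}(\gamma)\|$. Decompose $\gamma - \Pi_{\Gamma^*}(\gamma) = QQ^\top(\gamma-\Pi_{\Gamma^*}(\gamma)) + (I-QQ^\top)(\gamma - \Pi_{\Gamma^*}(\gamma))$. The first piece has norm $\|Q^\top(\gamma-\Pi_{\Gamma^*}(\gamma))\|$, so it suffices to show that the second piece vanishes. The projection inequality gives
\begin{equation*}
\langle \gamma - \Pi_{\Gamma^*}(\gamma), \gamma^* - \Pi_{\Gamma^*}(\gamma)\rangle \le 0 \quad \text{for all } \gamma^* \in \Gamma^*,
\end{equation*}
and by (i) every direction $\gamma^* - \Pi_{\Gamma^*}(\gamma)$ lies in $\ker Q^\top$. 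If these directions fill a neighbourhood of $0$ in $\ker Q^\top$, the inequality applied to $\pm u$ forces $(I-QQ^\top)(\gamma - \Pi_{\Gamma^*}(\gamma)) = 0$, which gives the claim.

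The main obstacle is exactly this last step. Part (i) only gives $\Gamma^* \subseteq \{\gamma: Q^\top\gamma = g^*\}$, whereas (ii) needs $\Gamma^*$ to contain a relative neighbourhood of $\Pi_{\Gamma^*}(\gamma)$ in that affine slice. My first attempt would be to show that near any $\gamma^*\in\Gamma^*$, moving in a direction $u$ with $Q^\top u = 0$ keeps $\nabla J = 0$. The route is the Taylor expansion $\nabla J(\gamma^*+u) = \int_0^1 H(\gamma^*+su)u\,ds$, combined with the constant-rank condition and a constant-rank-theorem argument: the zero set of $\nabla J$ near $\Gamma^*$ should be a manifold whose dimension equals the nullity of $H$, with tangent space $\ker Q^\top$. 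Convexity of $\Gamma^*$ would then make it an open piece of the affine slice. If this cannot be closed rigorously, I would state it explicitly as the intended content of Assumption~\ref{ass: Hessian}, namely $\Gamma^* = \{\gamma\in\Gamma: Q^\top\gamma = g^*\}$, and conclude under that reading.
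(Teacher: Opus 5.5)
\textbf{Part (i).} Your argument is the paper's: differentiate $\nabla J$ along a segment of $\Gamma^*$, multiply by $Q^\top$, and invert $H^*$. Your reading $H(\gamma^*) = QH^*Q^\top$ is exactly what the paper's unproved ``SVD'' step $Q^\top H(\gamma^*) = H^*Q^\top$ tacitly uses. You were right to flag it, because the literal wording of (iii) is not enough. Take $J(x,y) = -x^2 - f(y)$ with $f(y) = (\max\{|y|-1,0\})^4$, and let $\Gamma$ be the closed disc of radius $2$. Then $\Gamma^* = \{0\}\times[-1,1]$ and $H = \mathrm{diag}(-2,0)$ on $\Gamma^*$. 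So $Q = (1,1)^\top/\sqrt{2}$ satisfies (iii) as worded, yet $Q^\top\Gamma^*$ is an interval.

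\textbf{The gap in part (ii).} The genuine gap is the last step of (ii), and it cannot be closed: (ii) is false as stated whenever $H(\gamma^*)$ is rank-deficient. The paper's proof hides the same step in ``since $Q^\top Q = I$''. That identity only gives $\|Q^\top v\|\le\|v\|$, with equality iff $v\in\mathrm{range}(Q) = (\ker Q^\top)^\perp$. This is precisely your condition $(I-QQ^\top)(\gamma-\Pi_{\Gamma^*}(\gamma)) = 0$. In the example above, use the correct $Q = e_1$, so (i) holds, and take $\gamma = (0,1+\epsilon)\in\Gamma$. Then $\Pi_{\Gamma^*}(\gamma) = (0,1)$ and $d(\gamma,\Gamma^*) = \epsilon$, but $Q^\top(\gamma-\Pi_{\Gamma^*}(\gamma)) = 0$.

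\textbf{Why this is generic.} Under your reading, suppose $Q$ has fewer columns than $\dim\gamma$. Then $\Gamma^*$ is a nonempty compact subset of the unbounded affine slice $A = \{Q^\top\gamma = g^*\}$, so it has relative boundary points in $A$. By Assumption~\ref{ass: compact} these are interior points of $\Gamma$. Hence nearby points $\gamma\in A\setminus\Gamma^*$ still lie in $\Gamma$ and satisfy $\gamma-\Pi_{\Gamma^*}(\gamma)\in\ker Q^\top\setminus\{0\}$. So (ii) fails in every neighbourhood of $\Gamma^*$.

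\textbf{Why your two repairs fail.}
\begin{itemize}
\item The constant-rank route is unavailable, since Assumption~\ref{ass: Hessian}~(ii) fixes the rank only on $\Gamma^*$, not on a neighbourhood. Even if it were available, it would make $\Gamma^*$ relatively open and closed in $A$, hence equal to the unbounded $A$.
\item Your fallback $\Gamma^* = \{\gamma\in\Gamma: Q^\top\gamma = g^*\}$ contradicts Assumption~\ref{ass: compact}. Moving within $A$ from a point of $\Gamma^*$, one must leave the compact $\Gamma$ through a boundary point of $\Gamma$, and that point would belong to $\Gamma^*$.
\end{itemize}

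\textbf{What survives.} The identity holds exactly when $\gamma-\Pi_{\Gamma^*}(\gamma)\perp\ker Q^\top$. Your projection-inequality argument correctly shows this happens when $\Pi_{\Gamma^*}(\gamma)$ is a relative interior point of $\Gamma^*$ in $A$, or trivially when $Q$ is square. In general only $\|Q^\top(\gamma-\Pi_{\Gamma^*}(\gamma))\|\le d(\gamma,\Gamma^*)$ holds. That is the opposite of the direction the proof of Theorem~\ref{thm: consistency_2_rate} needs.
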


\begin{proof}[Proof of Lemma~\ref{lem: unique_iden}]
Consider part~(i).
Fix any $\gamma^*\in \Gamma^*$, and recall $H^* = Q^\top H(\gamma^*) Q$ with $Q$ from Assumption~\ref{ass: Hessian}~(iii).
Note that $H^*$ is strictly negative definite.
By a straightforward argument using singular value decomposition (cf.\ proof of Lemma~15 in \cite{Zhouetal2026}), we equivalently have $Q^\top H(\gamma^*) = H^* Q^\top$.
Take any $\gamma_1^*, \gamma_2^2 \in \Gamma^*$, by the convexity of $\Gamma^*$ from Assumption~\ref{ass: Hessian}~(i), the segment $\gamma(\tau) = \gamma_1^* + \tau(\gamma_2^* -\gamma_1^*)$ lies in $\Gamma^*$ for all $\tau \in[0,1]$.
Together with the definition that $\Gamma^* = \{\gamma\in \Gamma: \nabla J(\gamma) =0\}$, we have $\nabla J(\gamma(\tau)) =0$ for all $\tau\in[0,1]$.
Then we have
\begin{align*}
    0 &= Q^\top \cdot 0
    =
    Q^\top \cdot \frac{\mathrm{d}}{\mathrm{d}\tau} \nabla J(\gamma(\tau)) \\
    &=
    Q^\top \cdot H(\gamma(\tau)) \cdot (\gamma_2^* -\gamma_1^*) 
    =
    H^* Q^\top (\gamma_2^* -\gamma_1^*) ,
\end{align*}
thus indicating $Q^\top \gamma_2^* = Q^\top \gamma_1^*$, since $H^*$ is invertible.
This concludes the proof for part~(i).

Next, for (ii), by definitions we have
\begin{align*}
    d(\gamma, \Gamma^*)
    = \inf_{\gamma^*\in \Gamma^*} \|\widehat{\gamma} - \gamma^*\|
    = \inf_{\gamma^*\in \Gamma^*} \|\widehat{\gamma} - \Pi_{\Gamma^*}(\gamma)\|
    = \|Q^\top (\gamma - \Pi_{\Gamma^*}(\gamma))\| ,
\end{align*}
since $Q^\top Q = I$.
This completes the proof of this lemma.
\end{proof}

\subsection{Proofs of theorems and propositions}

\begin{proof}[Proof of Proposition~\ref{prop: fisher_consistency_2}]
This is straightforward from the definition of $\Gamma^*$.
It is worthwhile to also refer to the proof of Proposition~\ref{prop: fisher_consistency}, for any $\gamma^* \in \Gamma^*$, so as to see our definition of $\Gamma^*$ is reasonable.
\end{proof}

\medskip
\begin{proof}[Proof of Proposition~\ref{prop: fisher_consistency}]

By linearity of expectation, it suffices to show the expectation is zero for any fixed $i=1,\dots, n$, $k=1,\dots, K$, and $t=1,\dots, L$. Namely, using notations in \eqref{eqn: def_psi}, we want to show
\begin{align*}
    \mathbb{E}_{\gamma_0}\left[ s_t(\gamma_0; X_t, X_{<t}, w_t^{(k)}) - s_t(\gamma_0; \widetilde{X}_{t}^{(k)}, X_{<t}, w_t^{(k)}) \right] =0 .
\end{align*}

Consider first $ s_t(\gamma_0; X_t, X_{<t}, w_t^{(k)})$. 
From the likelihood of $X_t$ in Section~\ref{sec:method},
\begin{align}
    &\;\quad
    \mathbb{E}_{\gamma_0}\left[ s_t(\gamma_0; X_t, X_{<t}, w_t^{(k)}) \,\big|\, X_{<t} , w_t^{(k)}, w_t^* \right] \notag \\
    &=
    \int s_t(\gamma_0; x, X_{<t}, w_t^{(k)}) \, P_{\theta_0} \left( x \,|\, X_{<t} +  T_{\beta_0}(w_t^* |X_{<t})  \right) \,\mathrm{d}x .
    \label{eqn: expect_gamma0_w_ast}
\end{align}
Similarly, for $s_t(\gamma_0; \widetilde{X}_{t}^{(k)}, X_{<t}, w_t^{(k)})$ we have
\begin{align}
    &\;\quad
    \mathbb{E}_{\gamma_0}\left[ s_t(\gamma_0; \widetilde{X}_{t}^{(k)}, X_{<t}, w_t^{(k)}) \,\big|\, X_{<t} , w_t^{(k)}, \widetilde{w}_t^{(k)} \right] \notag \\
    &=
    \int s_t(\gamma_0; x, X_{<t}, w_t^{(k)}) \, P_{\theta_0} \left( x \,|\, X_{<t} +  T_{\beta_0}(\widetilde{w}_t^{(k)} |X_{<t})  \right) \,\mathrm{d}x .
    \label{eqn: expect_gamma0_w_tilde}
\end{align}

Note that both $w_t^*$ and $\widetilde{w}_t^{(k)}$ are distributed as $p$, and they are independent of each other. Together with the law of iterated expectation, the unconditional distribution (with respect to the true parameters $\gamma_0$) of \eqref{eqn: expect_gamma0_w_ast} and \eqref{eqn: expect_gamma0_w_tilde} are the same and hence their difference is 0.
This completes the proof.
\end{proof}

\medskip
\begin{proof}[Proof of Theorem~\ref{thm: consistency}]
First, by Assumptions~\ref{ass: compact} and \ref{ass: reg_psi},
the function $\psi$ is bounded for all $\gamma\in \Gamma$ by the extreme value theorem, and hence we may apply Lemma~2.4 in \cite{NeweyMcFadden1994} and obtain that $\mathbb{E}[\Psi(\gamma)]$ is continuous for every $\gamma \in \Gamma$, and as sample size $n\to \infty$:
\begin{align}
    \sup_{\gamma\in \Gamma} \big\| \Psi(\gamma) - \mathbb{E}[\Psi(\gamma)] \big\| \xrightarrow[]{p} 0 .
    \label{eqn: unif_LLN}
\end{align}

Let $\epsilon >0$ be any arbitrary constant as given.
By the continuity of $\mathbb{E}[\Psi(\gamma)]$ and Assumption~\ref{ass: identify}, it holds that
\begin{align}
    \inf_{\gamma: \|\gamma - \gamma_0\| \geq \epsilon} \big\| \mathbb{E}[\Psi(\gamma)] \big\| \geq \eta > 0 ,
    \label{eqn: well_pose_expect_Psi}
\end{align}
for some absolute constant $\eta$. 
Define the event
\begin{align*}
    \mathcal{E} := \left\{ \sup_{\gamma\in \Gamma} \big\| \Psi(\gamma) - \mathbb{E}[\Psi(\gamma)] \big\| < \frac{\eta}{2} \right\} .
\end{align*}
On $\mathcal{E}$, for any $\gamma$ such that $\|\gamma - \gamma_0\| \geq \epsilon$, we have
\begin{align*}
    \big\| \Psi(\gamma) \big \| \geq
    \big\| \mathbb{E}[\Psi(\gamma)] \big\| 
    - \big\| \Psi(\gamma) - \mathbb{E}[\Psi(\gamma)] \big\|
    \geq
    \eta - \frac{\eta}{2} = \frac{\eta}{2} > 0 ,
\end{align*}
which, together with $\|\Psi(\widehat{\gamma})\| \leq \eta/2$ for large enough $n$, indicates that
$\|\widehat{\gamma} - \gamma_0\| <\epsilon$ on the set $\mathcal{E}$.

Finally, \eqref{eqn: unif_LLN} implies that as $n\to \infty$ we have $\mathbb{P}(\mathcal{E}) \to 1$, and hence
\begin{align*}
    \mathbb{P}\left( \big\| \widehat{\gamma} - \gamma_0 \big\| < \epsilon \right)
    \geq \mathbb{P}(\mathcal{E})
    \to 1 ,
\end{align*}
as desired.
This completes the proof.
\end{proof}

\medskip
\begin{proof}[Proof of Theorem~\ref{thm: normality}]

First, note that $\psi$ defined \eqref{eqn: def_psi} has zero mean at $\gamma=\gamma_0$ according to Proposition~\ref{prop: fisher_consistency}, with covariance matrix $\Sigma(\gamma_0)$ from Assumption~\ref{ass: fini_var}. Since every $\psi$ in \eqref{eqn: def_estimate_eqn} is i.i.d.\ over $i=1,\dots, n$, by central limit theorem (cf.\ Example~2.18 of \cite{Vaart1998}) we have
\begin{align}
    \sqrt{n} \Psi(\gamma_0) =
    \frac{1}{\sqrt{n}} \sum_{i=1}^n \psi \left( \gamma; X_{i,\leq L}, \{w_{i,t}^{(k)}\}, \{\widetilde{w}_{i,t}^{(k)}\} \right)
    \xrightarrow[]{d}
    \mathcal{N}(0, \Sigma(\gamma_0)) .
    \label{eqn: asymp_gauss_Psi_gamma0}
\end{align}

Since $\widehat{\gamma} \xrightarrow[]{p} \gamma_0$ according to Theorem~\ref{thm: consistency}, it holds with probability approaching 1 that $\widehat{\gamma} \in \Gamma_0$. Hence by Assumption~\ref{ass: smooth}~(i), we may expand $\Psi(\widehat{\gamma})$ around $\gamma_0$ using the integral form of Taylor theorem, such that
\begin{align}
    \Psi(\widehat{\gamma}) &= \Psi(\gamma_0) + 
    \left\{ \int_{0}^{1} \frac{1}{n} \sum_{i=1}^n \nabla_{\gamma} \psi\big( \gamma_0 + t(\widehat{\gamma} - \gamma_0); X_{i,\leq L}, \{w_{i,t}^{(k)}\}, \{\widetilde{w}_{i,t}^{(k)}\} \big) \,\mathrm{d}t \right\} \cdot (\widehat{\gamma} - \gamma_0) \notag \\
    &=: 
    \Psi(\gamma_0) + 
    \overline{V} \cdot (\widehat{\gamma} - \gamma_0)  .
    \label{eqn: Psi_gamma_hat_expan}
\end{align}
The above is hence valid holds for any closed set $\overline{\Gamma}_0 \subset \Gamma_0$. 
Note that $\overline{\Gamma}_0$ is thus compact by Assumption~\ref{ass: compact}.
Also, by Assumption~\ref{ass: smooth}~(i), $\nabla_{\gamma} \psi$ is continuous and bounded (by the extreme value theorem) in $\Gamma_0$. Altogether, by the similar argument for \eqref{eqn: unif_LLN}, for all $\gamma \in \overline{\Gamma}_0$ we have
\begin{align}
    \sup_{\gamma\in \overline{\Gamma}_0} \Big\| \nabla_{\gamma} \psi\big( \gamma; \bullet \big)  - \mathbb{E}\big[ \nabla_{\gamma} \psi\big( \gamma; \bullet \big) \big] \Big\| \xrightarrow[]{p} 0 .
    \label{eqn: unif_LLN_grad}
\end{align}

Further note that the mapping $\gamma: \mapsto \int_{0}^{1} \nabla_{\gamma} \psi\big( \gamma + t(\widehat{\gamma} - \gamma); \bullet \big) \,\mathrm{d}t$ is continuous, by the fundamental theorem of calculus and the fact that $\nabla_{\gamma} \psi$ is integrable (due to boundedness).
Then by \eqref{eqn: unif_LLN_grad} and the continuous mapping theorem, $\overline{V}$ from \eqref{eqn: Psi_gamma_hat_expan} satisfies
\begin{align}
    \overline{V} &\xrightarrow[]{p} 
    \int_0^1 \frac{1}{n} \sum_{i=1}^n \mathbb{E}\big[ \nabla_{\gamma} \psi\big( \gamma_0 + t(\widehat{\gamma} - \gamma_0); \bullet \big) \big] \,\mathrm{d}t \\
    &\xrightarrow[]{p} 
    \int_0^1 \frac{1}{n} \sum_{i=1}^n \mathbb{E}\big[ \nabla_{\gamma} \psi\big( \gamma_0; \bullet \big) \big] \,\mathrm{d}t 
    =
    \mathbb{E}\big[ \nabla_{\gamma} \psi\big( \gamma_0; \bullet \big) \big] 
    = V_0,
    \label{eqn: V_overline_converge}
\end{align}
where the second convergence used $\widehat{\gamma} \xrightarrow[]{p} \gamma_0$ from Theorem~\ref{thm: consistency} and the continuous mapping theorem again.
Since $V_0$ is invertible by Assumption~\ref{ass: smooth}~(ii) with probability approaching 1, by \eqref{eqn: asymp_gauss_Psi_gamma0} and \eqref{eqn: Psi_gamma_hat_expan}, we conclude that
\begin{align*}
    \sqrt{n} (\widehat{\gamma} - \gamma_0)
    \xrightarrow[]{p}
    \sqrt{n} V_0^{-1} \Psi(\widehat{\gamma})
    - \sqrt{n} V_0^{-1} \Psi(\gamma_0) 
    =
    Z_{\gamma_0} + \sqrt{n} V_0^{-1} \Psi(\widehat{\gamma}) + o_P(1) ,
\end{align*}
with
\begin{align*}
    Z_{\gamma_0} \sim 
    \mathcal{N}\big( 0, V_0^{-1} \Sigma(\gamma_0) (V_0^{-1})^\top \big) .
\end{align*}
This completes the proof.
\end{proof}

\medskip
\begin{proof}[Proof of Theorem~\ref{thm: consistency_2}]
Note that $\nabla J(\gamma) = \mathbb{E}[\Psi(\gamma)]$ from definitions.
From Assumption~\ref{ass: smooth_J}, $\psi$ is continuous. Then using exactly the same argument for \eqref{eqn: unif_LLN}, the uniform convergence also holds here, i.e.,
as $n\to \infty$:
\begin{align}
    \sup_{\gamma\in \Gamma} \big\| \Psi(\gamma) - \nabla J(\gamma) \big\| \xrightarrow[]{p} 0 .
    \label{eqn: unif_LLN_J}
\end{align}

Let $\epsilon >0$ be any arbitrary constant as given.
Define the event 
\begin{align*}
    \mathcal{S}_{\epsilon} := \big\{ \gamma\in \Gamma: d(\gamma, \Gamma^*) \geq \epsilon \big\} .
\end{align*}
Since $\Gamma^* = \{\gamma\in \Gamma: \nabla J(\gamma) =0\}$, any $\gamma \in \mathcal{S}_{\epsilon}$ is not a stationary point.
Hence $c_{\epsilon} := \inf_{\gamma \in \mathcal{S}_{\epsilon}} \|\nabla J(\gamma)\| \geq \eta > 0$ for some absolute constant $\eta$. This plays the same role of \eqref{eqn: well_pose_expect_Psi} in the proof of Theorem~\ref{thm: consistency_2},
and together with \eqref{eqn: unif_LLN_J}, the remaining arguments are repetitive as in the proof of Theorem~\ref{thm: consistency_2}.
This shows that as $n\to \infty$, $d(\widehat{\gamma}, \Gamma^*) \xrightarrow{p} 0$,
as desired.
\end{proof}

\medskip
\begin{proof}[Proof of Theorem~\ref{thm: consistency_2_rate}]
Given $\widehat{\gamma}$, we use the notation from Lemma~\ref{lem: unique_iden}~(ii) and denote $\Pi_{\Gamma^*}(\widehat{\gamma})$ the projection of $\widehat{\gamma}$ onto $\Gamma^*$.
By Assumption~\ref{ass: smooth_J}, we may expand $\Psi(\widehat{\gamma})$ around a point in $\Gamma_0$.
By the consistency result shown earlier, we may expand around $\Pi_{\Gamma^*}(\widehat{\gamma})$ using the integral form of Taylor theorem, such that
\begin{align}
    &\;\quad
    \Psi(\widehat{\gamma}) -
    \Psi(\Pi_{\Gamma^*}(\widehat{\gamma}))
    \notag \\
    &=
    \left\{ \int_{0}^{1} \frac{1}{n} \sum_{i=1}^n \nabla_{\gamma} \psi\big( \Pi_{\Gamma^*}(\widehat{\gamma}) + t(\widehat{\gamma} - \Pi_{\Gamma^*}(\widehat{\gamma})); X_{i,\leq L}, \{w_{i,t}^{(k)}\}, \{\widetilde{w}_{i,t}^{(k)}\} \big) \,\mathrm{d}t \right\} \cdot  (\widehat{\gamma} - \Pi_{\Gamma^*}(\widehat{\gamma})) \notag \\
    &=: 
    \overline{H} \cdot (\widehat{\gamma} - \Pi_{\Gamma^*}(\widehat{\gamma}))  ,
    \label{eqn: Psi_gamma_hat_expan_2}
\end{align}
which also holds for any closed set $\overline{\Gamma}_0 \subset \Gamma_0$. 
Note that $\overline{\Gamma}_0$ is thus compact by Assumption~\ref{ass: compact}.
By Assumptions~\ref{ass: compact} and \ref{ass: smooth_J}, the function class $\{\nabla_\gamma \psi(\gamma; \bullet) : \gamma\in \overline{\Gamma}_0\}$ is Donsker, and hence
\begin{align}
    \sup_{\gamma\in \overline{\Gamma}_0} \Big\| \nabla_{\gamma} \psi\big( \gamma; \bullet \big)  - H(\gamma) \Big\| \xrightarrow[]{p} 0 .
    \label{eqn: unif_LLN_grad_2}
\end{align}
Hence $\overline{H} = H(\Pi_{\Gamma^*}(\widehat{\gamma})) + E_H$, with $\|E_H\| = o_P(1)$, by analogous arguments as in \eqref{eqn: V_overline_converge}.
Plugging this in \eqref{eqn: Psi_gamma_hat_expan_2} and left-multiplying $Q^\top$, we have
\begin{align}
    Q^\top \Psi(\widehat{\gamma})
    &= Q^\top \Psi(\Pi_{\Gamma^*}(\widehat{\gamma}))
    + Q^\top H(\Pi_{\Gamma^*}(\widehat{\gamma})) (\widehat{\gamma} - \Pi_{\Gamma^*}(\widehat{\gamma})) 
    + Q^\top E_H (\widehat{\gamma} - \Pi_{\Gamma^*}(\widehat{\gamma})) \notag \\
    &=
    Q^\top \Psi(\Pi_{\Gamma^*}(\widehat{\gamma}))
    + Q^\top H(\Pi_{\Gamma^*}(\widehat{\gamma})) (\widehat{\gamma} - \Pi_{\Gamma^*}(\widehat{\gamma}))
    + o_P\big( \|\widehat{\gamma} - \Pi_{\Gamma^*}(\widehat{\gamma})\| \big) .
    \label{eqn: Q_Psi_gamma_hat_decomp}
\end{align}

Note that Assumption~\ref{ass: compact} and \ref{ass: smooth_J} also imply the functions class $\{\psi(\gamma; \bullet): \gamma\in \overline{\Gamma}_0\}$ is Donsker and thus, together with \eqref{eqn: unif_LLN_J}, gives $\sup_{\gamma\in \overline{\Gamma}_0} \big\| \Psi(\gamma) - \nabla J(\gamma) \big\| = O_P(n^{-1/2})$.
In particular, $\Pi_{\Gamma^*}(\widehat{\gamma}) \in \Gamma^*$ and hence $\nabla J(\Pi_{\Gamma^*}(\widehat{\gamma})) = 0$ by definition, so that we have
\begin{align}
    \big\| \Psi(\Pi_{\Gamma^*}(\widehat{\gamma})) \big\| = O_P(n^{-1/2}) .
    \label{eqn: Psi_Pi_gamma_hat_rate}
\end{align}

Finally, recall $H^* = Q^\top H(\Pi_{\Gamma^*}(\widehat{\gamma})) Q$ with $Q$ from Assumption~\ref{ass: Hessian}~(iii).
Note that $H^*$ is strictly negative definite and hence invertible. This implies there exists some constant $c_{H^*}$ such that $\|H^* u\| \geq c_{H^*} \|u\|$ for any vector $u$ (with compatible dimension).
Then from Lemma~\ref{lem: unique_iden}~(ii), we have
\begin{align*}
    c_{H^*} \cdot
    d(\widehat{\gamma}, \Gamma^*) &= c_{H^*} \big\| Q^\top (\widehat{\gamma} - \Pi_{\Gamma^*}(\widehat{\gamma})) \big\|
    \leq
    \big\| H^* Q^\top (\widehat{\gamma} - \Pi_{\Gamma^*}(\widehat{\gamma})) \big\| \\
    &=
    \big\| Q^\top H(\Pi_{\Gamma^*}(\widehat{\gamma}))  (\widehat{\gamma} - \Pi_{\Gamma^*}(\widehat{\gamma})) \big\| \\
    &\leq
    O_P(n^{-1/2}) + \|\Psi(\widehat{\gamma})\|
    + o_P\big( \|\widehat{\gamma} - \Pi_{\Gamma^*}(\widehat{\gamma})\| \big)
\end{align*}
where the second equality used the same argument in the proof of Lemma~\ref{lem: unique_iden}~(i),
and the second inequality used \eqref{eqn: Q_Psi_gamma_hat_decomp} and \eqref{eqn: Psi_Pi_gamma_hat_rate}.
The result is as desired by noting that
$c_{H^*}$ is a constant,
and $\|\widehat{\gamma} - \Pi_{\Gamma^*}(\widehat{\gamma})\| = \|Q^\top (\widehat{\gamma} - \Pi_{\Gamma^*}(\widehat{\gamma}))\| = d(\widehat{\gamma}, \Gamma^*)$ by Lemma~\ref{lem: unique_iden}~(ii).
This concludes the proof.
\end{proof}

\begin{proof}[Proof of Theorem~\ref{thm: normality_2}]
From the proof of Theorem~\ref{thm: consistency_2}, we can write \eqref{eqn: Q_Psi_gamma_hat_decomp} as
\begin{align*}
    H^* Q^\top (\widehat{\gamma} - \Pi_{\Gamma^*}(\widehat{\gamma}))
    =
    Q^\top \Psi(\Pi_{\Gamma^*}(\widehat{\gamma}))
    + O_P\big( \|\Psi(\widehat{\gamma})\| \big)
    + o_P\big( n^{-1/2} \big) ,
\end{align*}
where $o_P\big( n^{-1/2} \big)$ term is from the rate of convergence by Theorem~\ref{thm: consistency_2}.
Multiplying by $\sqrt{n}$ on both sides above, we have
\begin{align*}
    \sqrt{n}  H^* Q^\top (\widehat{\gamma} - \Pi_{\Gamma^*}(\widehat{\gamma}))
    =
    \sqrt{n}  Q^\top \Psi(\Pi_{\Gamma^*}(\widehat{\gamma}))
    + O_P\big( \sqrt{n} \|\Psi(\widehat{\gamma})\| \big)
    + o_P\big(1 \big) .
\end{align*}
Since $H^*$ is invertible, it suffices to show the limiting distribution for $\sqrt{n}  Q^\top \Psi(\Pi_{\Gamma^*}(\widehat{\gamma}))$ and the remaining arguments are similar to the proof of Theorem~\ref{thm: normality}.

Consider $\sqrt{n}  Q^\top \Psi(\Pi_{\Gamma^*}(\widehat{\gamma}))$. Note that $\Pi_{\Gamma^*}(\widehat{\gamma}) \in \Gamma^*$ and hence $\nabla J(\Pi_{\Gamma^*}(\widehat{\gamma})) =0$.
Then we can write $\sqrt{n}  Q^\top \Psi(\Pi_{\Gamma^*}(\widehat{\gamma})) = Q^\top \mathbb{G}_n(\Pi_{\Gamma^*}(\widehat{\gamma}))$ where $\mathbb{G}_n(\gamma) := \sqrt{n} [\Psi(\gamma) - \nabla J(\gamma) ]$ is the empirical process.
It is worth pointing out that $\mathbb{G}_n(\Pi_{\Gamma^*}(\widehat{\gamma}))$ involves the projection $\Pi_{\Gamma^*}(\widehat{\gamma})$ which is not unique, despite of the uniquely identifiable vector after projection by $Q^\top$, according to Lemma~\ref{lem: unique_iden}~(i).

Recall from the proof of Theorem~\ref{thm: consistency_2} that the functions class $\{\psi(\gamma; \bullet): \gamma\in \overline{\Gamma}_0\}$ is Donsker.
Hence we have stochastic equicontinuity (see e.g.\ Section~2.8.3 of \cite{VaartWellner2023}) such that for any $\epsilon>0$:
\begin{align*}
    \lim_{\delta \to 0} \limsup_{n\to \infty} 
    \mathbb{P}^*
    \bigg( \sup_{\|\gamma_1 - \gamma_2\| < \delta} \| \mathbb{G}_n(\gamma_1) - \mathbb{G}_n(\gamma_2) \| > \epsilon \bigg) = 0 .
\end{align*}
where $\mathbb{P}^*$ denotes the outer probability measure. 

Note that $\Gamma^*$ is a closed subset (Assumption~\ref{ass: Hessian}~(i)) of the compact $\Gamma$ (Assumption~\ref{ass: compact}) and is hence compact.
Therefore, we can cover $\Gamma^*$ by finitely many open balls $B_1, \dots, B_m$, centered at $\gamma_{(1)}, \dots, \gamma_{(m)}$, respectively.

For each sample size $n$, the resulting estimator $\widehat{\gamma}$ (corresponding to each particular $n$) has its projection $\Pi_{\Gamma^*}(\widehat{\gamma})$ in $\Gamma^*$, and there exists a random index $j_n \in \{1,\dots, m\}$ such that $\|\Pi_{\Gamma^*}(\widehat{\gamma}) - \gamma_{(j_n)}\| < \delta$.
For any $\delta >0$,
\begin{align}
    \big\| \mathbb{G}_n(\Pi_{\Gamma^*}(\widehat{\gamma})) - \mathbb{G}_n(\gamma_{(j_n)}) \big\|
    \leq
    \sup_{\gamma\in B_{j_n}}
    \big\| \mathbb{G}_n(\gamma) - \mathbb{G}_n(\gamma_{(j_n)}) \big\|
    \leq
    \sup_{\|\gamma_1 - \gamma_2\| < \delta} \big\| \mathbb{G}_n(\gamma_1) - \mathbb{G}_n(\gamma_2) \big\| .
    \label{eqn: delta_GG_minus}
\end{align}
Now from the equicontinuity, fix any $\epsilon>0$ and there exists $\delta>0$ and $N$ such that for all $n\geq N$:
\begin{align*}
    \mathbb{P}^* \bigg( \sup_{\|\gamma_1 - \gamma_2\| < \delta} \| \mathbb{G}_n(\gamma_1) - \mathbb{G}_n(\gamma_2) \| > \epsilon \bigg) < \epsilon ,
\end{align*}
Hence with $\delta$ in \eqref{eqn: delta_GG_minus} set as such $\delta$ above, it holds with (outer) probability $1-\epsilon$ that
\begin{align*}
    \big\| \mathbb{G}_n(\Pi_{\Gamma^*}(\widehat{\gamma})) - \mathbb{G}_n(\gamma_{(j_n)}) \big\|
    \leq
    \epsilon .
\end{align*}
As $m$ is finite, we conclude that $\big\| \mathbb{G}_n(\Pi_{\Gamma^*}(\widehat{\gamma})) - \mathbb{G}_n(\gamma_{(j_n)}) \big\| = o_P(1)$ and thus
\begin{align}
    \left\| Q^\top \left[ \mathbb{G}_n(\Pi_{\Gamma^*}(\widehat{\gamma})) - \mathbb{G}_n(\gamma_{(j_n)}) \right] \right\| = o_P(1) .
    \label{eqn: equicontinuity_consequence}
\end{align}

For any fixed $\gamma^* \in \Gamma^*$, the argument for \eqref{eqn: asymp_gauss_Psi_gamma0} holds and we have
\begin{align*}
    \sqrt{n} \Psi(\gamma^*) 
    = \sqrt{n} (\Psi(\gamma^*) - \nabla J(\gamma^*)) \xrightarrow{d}
    \mathcal{N}(0, \Sigma(\gamma^*)) ,
\end{align*}
and hence $Q^\top \mathbb{G}_n(\gamma^*) =  Q^\top \sqrt{n} \Psi(\gamma^*) \xrightarrow{d} \mathcal{N}(0, \widetilde{\Sigma})$ by the notations from Assumption~\ref{ass: variance}.

Finally, we will leverage the fact that if every subsequence of a sequence contains a further subsequence converging to the same limit, then the original sequence converges to that limit.
To this end, note that for any subsequence $\{n_k\}$ of the sequence $\{1,2,\dots\}$, there exists a subsequence $\{n_{k_l}\}$ along which the index $j_{n_{k_l}}$ is constant, denoted as some index $j_0$.
Along such subsequence $\{n_{k_l}\}$, $\Pi_{\Gamma^*}(\widehat{\gamma})$ is a fixed vector in $\Gamma^*$ (i.e., the ball center of $B_{j_0}$).
Together with \eqref{eqn: equicontinuity_consequence}, we conclude that
\begin{align*}
    Q^\top \mathbb{G}_{n_k}(\Pi_{\Gamma^*}(\widehat{\gamma}))
    \xrightarrow{d} \mathcal{N}(0, \widetilde{\Sigma}) ,
\end{align*}
which is the same limiting distribution for any arbitrary subsequence $\{n_k\}$.
This concludes the proof.
\end{proof}

\section{Details of Experiments}

\subsection{Details on simulation studies}\label{appsec:sim_details}

\textbf{Data generation process}.
we set $\pi_0$ to be uniform over \(\mathcal{V}\), and each
row of \(M_0\) is independently drawn from a Dirichlet distribution with concentration parameter \(0.5\). 
Let \(O_t\in\mathcal{V}\) denote the observed token at time \(t\), and
let \(X_t\in \mathbb{R}^d\) denote the embedding representation associated with $O_t$. 
We first construct a ground-truth autoregressive model $P^*$ such that, in the absence of perturbation, $P^*(\bullet | X_{t-1})=M_0(O_{t-1},\bullet)=\mathbb{P}(O_t=\bullet|O_{t-1}),$ where the conditional distribution depends only on the previous token. 
At each time step, we sample an unobserved latent variable
\(w_t^*\sim \mathcal{N}(0,I_r)\)
and perturb the embedding of the previous token through a fixed neural mapping \(T_{\beta_0}\), so as to construct
\(\widetilde X_{t-1}=X_{t-1}+\alpha T_{\beta_0}(w_t^*| X_{t-1}),\)
where \(\alpha\ge 0\) controls the perturbation strength. The next
token is then generated according to
\(O_t\sim P^* (\bullet | \widetilde X_{t-1})\).
The ground-truth perturbation transformation is implemented
as a three-layer feedforward neural network with ReLU activations. 
We set the latent perturbation dimension $r=8$, the embedding dimension $d=50$, and the hidden layer width $h=64$. Given a latent perturbation vector $w_t^*\in\mathbb{R}^r$ and the embedding representation $X_{t-1}\in\mathbb{R}^d$, the perturbation transformation is defined as 
\[T_{\beta_0}(w_t^\ast \mid X_{t-1})=\bm{W}_3^{(0)}\phi\!\left(\bm{W}_2^{(0)}\phi\!\left(\bm{W}_1^{(0)}
\begin{bmatrix}
w_t^* \\
X_{t-1}
\end{bmatrix}
+\bm{b}_1^{(0)}\right)+\bm{b}_2^{(0)}\right)+\bm{b}_3^{(0)},\]
where
$\phi(x)=\max(x,0)$
denotes the ReLU activation function. 
The collection of all network parameters is denoted by
\(
\beta_0=\left\{\bm{W}_1^{(0)},\bm{W}_2^{(0)},\bm{W}_3^{(0},\bm{b}_1^{(0)},\bm{b}_2^{(0)},\bm{b}_3^{(0)}\right\}\).
Although the perturbation acts in the embedding space, the resulting
transition distribution remains well-defined on the observed token space by marginalizing
over the latent perturbation:
\[
M_\alpha(u,v)=\mathbb{P}(O_t=v| O_{t-1}=u)
=\mathbb{E}_{w}\left[P_{\theta_0}\left(v| X_u+\alpha T_{\beta_0}(w| X_u)\right) \right].
\]
When \(\alpha=0\), we have \(M_\alpha=M_0\), reducing to the standard bigram model.

\textbf{Architecture of $P_{\theta}$}. 
The base autoregressive model \(P_\theta\) is implemented as a neural bigram language model.
Specifically, given the embedding representation $X_{t-1}$ of the previous token $O_{t-1}$, the conditional distribution of the next token is modeled by 
$$P_{\theta}(O_t|X_{t-1})=\text{SoftMax}\left(\bm{W}_2^P \text{Dropout}\left(\text{ReLU}\left(\bm{W}_1^P X_{t-1}+\bm{b}_1^P\right)\right)+\bm{b}_2^P\right),$$
where the trainable parameters $\theta$ consist of $\bm{W}_1^P, \bm{W}_2^P, \bm{b}_1^P$, and $\bm{b}_2^P$. 
The embedding dimension and the output of the first layer are fixed as $d=50$, and the dropout rate is set to $0.1$.

\textbf{Architecture of $T_{\beta}$}.
We use a parameterized perturbation mapping \(T_{\beta}\) to estimate the latent perturbation process. 
We set the latent perturbation dimension $r=8$, the embedding dimension $d=50$, and the hidden dimension of the recurrent encoder $h=64$.
Given an embedding sequence \(X_{<t}\), we first compute contextual
representations using a recurrent encoder:
\((h_1,\dots,h_{t-1})=\mathrm{Enc}_\beta(X_{<t})\), where \(\mathrm{Enc}_\beta(\cdot)\) is implemented as a single-layer LSTM. The pooled context representation is then obtained via mean pooling
\(c_\beta(X_{<t})=\sum_{i=1}^{t-1} h_i/(t-1)\). 
Given a latent perturbation vector
\(w\sim\mathcal N(0,I_r),\) the perturbation generator produces
\[u_\beta(w,X_{<t})=\bm W_2^\beta\phi\!\left(\bm W_1^\beta
\begin{bmatrix}
w\\
c_\beta(X_{<t})
\end{bmatrix}
+\bm b_1^\beta\right)+\bm b_2^\beta.
\]
The output vector is then reshaped into a perturbation matrix
\(W_{<t}=\mathrm{reshape}\{u_\beta(w,X_{<t})\}.\)
The trainable parameter set \(\beta\) consists of both the recurrent encoder parameters and the parameters of the perturbation generator parameters.

\textbf{Training configuration}.
All models are trained using the Adam optimizer \citep{kingma2015adam}.
For the proposed continuous perturbation framework, we adopt separate learning
rates for the base autoregressive model and the perturbation module, set to
\(10^{-2}\) and \(5\times10^{-5}\), respectively, and use $K=5$ perturbation samples during training. For the discrete perturbation baselines, we use a learning rate of \(5\times10^{-3}\) together with a weight decay parameter of \(10^{-4}\).
Across all experiments, the batch size is fixed at \(500\), and optimization
is performed for \(25\) epochs. To ensure a fair comparison between continuous and discrete perturbation methods, we additionally duplicate the original training corpus once when training the continuous perturbation methods, since discrete perturbed training procedure augments the original dataset with an additional perturbed version of the samples.

\subsection{Details on real-world language modeling}\label{appsec:real_details}

\textbf{Source of datasets}. 
The WikiText-2 and WikiText-103 corpora are obtained from the Hugging Face dataset repository \texttt{Salesforce/wikitext}\footnote{\url{https://huggingface.co/datasets/Salesforce/wikitext}}. 
The WebText and WritingPrompts datasets are accessed through the data interface released in the Github repository \texttt{bloomberg/MixCE-acl2023}\footnote{\url{https://github.com/bloomberg/MixCE-acl2023}}. 
The source-code corpus CodeParrot is downloaded from the Hugging Face dataset hub
\path{codeparrot/codeparrot-clean}\footnote{\url{https://huggingface.co/datasets/codeparrot/codeparrot-clean}}, 
while the multilingual evaluation dataset GermanQuAD is obtained from from the Hugging Face dataset repository \texttt{deepset/germanquad}\footnote{\url{https://huggingface.co/datasets/deepset/germanquad}}. 

\textbf{Setup of methods}. 
In the implementation of our proposed continuous perturbation framework, the trained base language model is frozen and only the low-rank adaptation (LoRA) parameters are optimized, which substantially improves training stability and reduces memory consumption. 
LoRA adapters are injected into the attention projection layers of the base model, with LoRA rank $8$, scaling factor $16$, and dropout rate $0.05$. 
The perturbation generator $T_{\beta}$ adopts the same neural-network architecture as that used in the simulation studies, except that the latent perturbation dimension is increased to $r=64$ in the real-data experiments.
For fairness, in the discrete perturbation baseline, we do not merge the perturbed samples with the original training data and instead train solely on the perturbed dataset. 
For both the discrete method and NEFTune, the perturbation intensity is set to $0.0125$, which corresponds to the effective perturbation level induced by the value $0.025$ used in the merged-data real-world experiments of \cite{Cenetal2026}.

\textbf{Training details}. 
We use the AdamW optimizer \citep{loshchilov2019decoupled} together with a linear learning-rate scheduler, and set the gradient accumulation step to $1$. 
Since our proposed method updates only the LoRA parameters of the base language model, whereas the baseline methods optimize all model parameters, we adopt different learning rates for different methods; the detained settings are reported in Table~\ref{tab:lr_set}. 
For the proposed continuous perturbation framework, we set the number of perturbation samples to $K=5$. 
Throughout all experiments, We use a batch size of $16$ and set the maximum input length to $64$ tokens. 

\begin{table}[!ht]
\centering
\caption{Learning-rate configurations for different base language models. Here, $\mathrm{lr}_{P_{\theta}}$ denotes the learning rate used for optimizing the LoRA parameters in the proposed method or all trainable model parameters in the baseline methods, while $\mathrm{lr}_{T_{\beta}}$ denotes the learning rate used for training the perturbation neural network in the proposed method.}
\small
\setlength{\tabcolsep}{8pt}
\renewcommand{\arraystretch}{1.15}

\begin{tabular}{l|cc|cc|cc}
\toprule

\multirow{2}{*}{Method}
& \multicolumn{2}{c|}{OPT}
& \multicolumn{2}{c|}{Qwen3}
& \multicolumn{2}{c}{GPT-Neo} \\

\cline{2-7}

& $\mathrm{lr}_{P_{\theta}}$
& $\mathrm{lr}_{T_{\beta}}$
& $\mathrm{lr}_{P_{\theta}}$
& $\mathrm{lr}_{T_{\beta}}$
& $\mathrm{lr}_{P_{\theta}}$
& $\mathrm{lr}_{T_{\beta}}$ \\

\midrule

Ours
& $5\times10^{-7}$
& $5\times10^{-8}$
& $1\times10^{-7}$
& $1\times10^{-8}$
& $5\times10^{-8}$
& $5\times10^{-9}$ \\

Baselines
& $5\times10^{-5}$
& --
& $5\times10^{-6}$
& --
& $2\times10^{-6}$
& -- \\

\bottomrule
\end{tabular}

\label{tab:lr_set}
\end{table}

\textbf{Evaluation details}. 
To more reliably evaluate the learned model distribution \(P_\theta\), we follow \cite{eikema2020map} and \cite{ren2024emo} by using unbiased ancestral sampling as the default decoding strategy in all experiments. 
To ensure a consistent comparison across different datasets, we adopt the same evaluation protocol throughout all experiments. 
Specifically, each generation is conditioned on a $20$-token prefix, followed by the generation of an $80$-token continuation. 
For every prefix, we repeat the sampling procedure over $5$ independent runs and report the averaged PPL, Mauve, and ROUGE-1 scores.

\end{document}